\def\eqref#1{equation~\ref{#1}}
\def\1{\bm{1}}
\DeclareMathAlphabet{\mathsfit}{\encodingdefault}{\sfdefault}{m}{sl}
\SetMathAlphabet{\mathsfit}{bold}{\encodingdefault}{\sfdefault}{bx}{n}
\let\classAND\AND
\let\AND\relax
\let\AND\classAND
\newcommand{\algorithmicdoinparallel}{\textbf{do in parallel}}
  \newcommand{\FORALLP}[2][default]{\ALC@it\algorithmicforall\ #2\ %
    \algorithmicdoinparallel\ALC@com{#1}\begin{ALC@for}}%
\newcommandx{\uvc}[2][1=]{\todo[color=Turquoise!50,#1]{\sf \textbf{\"Umit:} #2}\xspace}
\newcommandx{\mfb}[2][1=]{\todo[color=red!50,#1]{\sf \textbf{Fatih:} #2}\xspace}
\newcommandx{\dl}[2][1=]{\todo[color=green!50,#1]{\sf \textbf{DL:}  #2}\xspace}
\newcommandx{\ks}[2][1=]{\todo[color=yellow!50,#1]{\sf \textbf{ks:}  #2}\xspace}
\theoremstyle{plain}
\newtheorem{theorem}{Theorem}[section]
\theoremstyle{definition}
\theoremstyle{remark}
\title{Cooperative Minibatching in Graph Neural Networks}
\author{%
  \name Muhammed Fatih Bal{\i}n\thanks{Part of this work was done during an internship at NVIDIA Corporation in the summer of 2022.} \email balin@gatech.edu \\
  \addr School of Computational Science and Engineering \\ 
  Georgia Institute of Technology, Atlanta, GA, USA
  \AND
  \name Dominique LaSalle \email dlasalle@nvidia.com \\
  \addr NVIDIA Corporation, Santa Clara, CA, USA
  \AND
  \name \"Umit V. \c{C}ataly\"urek\thanks{Amazon Web Services. This publication describes work performed at the Georgia Institute of Technology and is not associated with AWS.} \email umit@gatech.edu \\
  \addr School of Computational Science and Engineering \\ 
  Georgia Institute of Technology, Atlanta, GA, USA
}
\begin{document}

\maketitle

\begin{abstract}
Training large scale Graph Neural Networks (GNNs) requires significant computational resources,
and the process is highly data-intensive.
One of the most effective ways to reduce resource requirements is minibatch training 
coupled with graph sampling.
GNNs have the unique property that items in a minibatch have overlapping data. 
However, the commonly implemented Independent Minibatching approach assigns each Processing 
Element (PE, i.e., cores and/or GPUs) its own minibatch to process, leading to duplicated computations and input data access across PEs. 
This amplifies the Neighborhood Explosion Phenomenon (NEP), which is the main bottleneck limiting scaling. 
To reduce the effects of NEP in the multi-PE setting,
we propose a new approach called Cooperative Minibatching. 
Our approach capitalizes on the fact that the size of the sampled subgraph is a concave function of the batch size, leading to 
significant reductions in the amount of work as batch sizes increase. Hence, it is favorable for 
processors equipped with a fast interconnect to work on a large minibatch together as a single larger processor, instead of working on separate smaller 
minibatches, even though global batch size is identical.
We also show how to take advantage of the same phenomenon in serial execution by generating dependent consecutive minibatches. 
Our experimental evaluations show up to 4x bandwidth savings for fetching vertex embeddings, by simply increasing 
this dependency without harming model convergence. Combining our proposed approaches, we achieve up to 64\% 
speedup over Independent Minibatching on single-node multi-GPU systems, using same resources.
\end{abstract}

\section{Introduction}

Graph Neural Networks (GNNs) have become de facto deep learning models for
unstructured data, achieving state-of-the-art results
on various application domains involving graph data such as recommendation
systems~\citep{recommendersystemssurvey,pinsage}, fraud detection~\citep{fraudgnn2,fraudgnn}, identity
resolution~\citep{Xu_2019},
and traffic forecasting~\citep{trafficforecastingsurvey}.
However, as the usage of technology continues to increase,
the amount of data generated by these applications is growing exponentially,
resulting in large and complex graphs
that are infeasible or too time-consuming to train on a single processing element~\citep{pinsage, aligraph}.
For example, some social media graphs are reaching billions of vertices and trillions of interactions~\citep{trillionfacebook}.
Efficient distributed training of GNNs is essential for extracting value from large-scale unstructured data that exceeds the cost of storing and processing such data.

Due to the popularity of Deep Neural Networks (DNNs) and the need to support larger models and datasets,
a great deal of research has focused on increasing the scale and efficiency of distributed DNN training.
Techniques such as data parallelism~\citep{ginsburg2018large, imagenetonehour},
pipelining~\citep{pipelining}, and intra-layer parallelism~\citep{intralayer}
have been employed.
Following the success of traditional distributed DNN training, the same techniques
have also been adapted to GNN training, such as data
parallelism~\citep{gandhi2021p3, pagraph, zheng2021distributed, aligraph} and
intra-layer parallelism~\citep{CAGNET}.

The parallelization techniques mentioned earlier are used to scale both full-batch and minibatch training in a distributed setting.
Minibatch training~\citep{dnnminibatch} is the
go-to method to train DNN models as it outperforms
full-batch training in terms of convergence~\citep{sgd_vs_gd, efficient_minibatch_training, keskar2016large, wilson2003general}, and more recently has been shown to also offer
the same benefit for GNNs~\citep{zheng2021distributed}.
In the distributed setting, minibatch training for DNNs using data parallelism is straightforward.
The training samples are partitioned across the Processing Elements (PE) and they compute
the forward/backward operations on their minibatches.
The only communication required is an all-reduce operation for the gradients.

Unfortunately, minibatch training a GNN model is more challenging than a usual DNN model.
GNNs turn a given graph encoding relationships into computational dependencies.
Thus in an $L$-layer GNN model, each minibatch computation has a different
structure as it is performed on the $L$-hop neighborhood of the minibatch vertices.
Real-world graphs usually are power law graphs~\citep{powerlawgraphs} with small diameters, thus, it is a challenge
to train deep GNN models as the $L$-hop neighborhood grows exponentially w.r.t. $L$,
reaching almost the whole graph within a few hops.
This is especially the case for the node classification and link prediction scenarios where the graph can be large and a single connected component contains almost all the data.

Very large GNN datasets necessitate storing the graph and node embeddings on slower storage mediums. To enable 
GNN training efficiently in such a setting, several techniques have been
proposed~\citep{ginex2022, waleffe2022mariusgnn}. These studies assume that the graph and its features are stored on 
disks or SSDs and design their systems to reduce data transfers. The methods proposed in this paper directly apply to these settings by reducing bandwidth requirements, as seen in~\cref{secc:experiments}.

A single epoch of full-batch GNN training requires computation proportional to the number
of layers $L$ and the size of the graph. However, minibatch
training requires more operations to process a single epoch due to repeating calculations in the
$2$nd through $L$-th layers.
As the batch size decreases, the number of repeated calculations increases. This is because the
vertices and edges have to be processed each time they appear in the $L$-hop neighborhood.
Thus, it is natural to conclude that using effectively larger batch sizes in GNNs reduces the
number of computations and data accesses of an epoch in contrast to
regular DNN models. Our contributions in this work
utilizing this important observation can be listed as follows:
\begin{compactitem}
    \item Investigate work vs. batch size relationship and present theorems stating
    the cost of processing a minibatch is a concave function of the batch size
    (\cref{th:work_monotonicity,th:overlap_monotonicity}).
    \item Utilize this relationship by combining data and intra-layer parallelism
    to process a minibatch across multiple PEs for reduced work (\cref{subsecc:cooperating_pes}),
    with identical global batch size. We call this new approach {\em Cooperative Minibatching}.
    \item Use the same idea to generate consecutive dependent minibatches to increase temporal
    vertex embedding access locality (\cref{subsecc:dependent_batches}).
    {\em Dependent Minibatching} can reduce the transfer amount of vertex embeddings up to
    $4\times$, without harming model convergence.
    \item Show that the two approaches are orthogonal.
    When combined, they result in up to $1.64\times$ speedup over Independent Minibatching with identical global batch size.
    \item Further show the applicability of Cooperative Minibatching to the subgraph
    sampling methods and prove that sampled subgraph density is a nondecreasing
    function of the batch size (\cref{th:subgraph_density_monotonicity}).
\end{compactitem}


\section{Background}
\label{secc:background}

A graph $\mathcal{G} = (V, E)$ consists of vertices $V$ and edges $E \subset V \times V$ along
with optional edge weights $A_{ts} > 0, \forall (t \to s) \in E$. Given a vertex
$s$, the $1$-hop neighborhood $N(s)$ is defined as $N(s) = \{t | (t \to s) \in E
\}$, and it can be naturally expanded to a set of vertices
$S$ as $N(S) = \cup_{s \in S} N(s)$.

GNN models work by passing previous layer embeddings ($H$) from $N(s)$
to $s$, and then combining them using a nonlinear function $f^{(l)}$ at layer
$l$, given initial vertex features $H^{(0)}$:

\begin{equation}
    \label{eq:gnn}
    H_s^{(l + 1)} = f^{(l)}(H_s^{(l)}, \{ H_t^{(l)} \mid t \in N(s)\})
\end{equation}

If the GNN model has $L$ layers, then the loss is computed by taking the final layer $L$'s embeddings and averaging their losses
over the set of training vertices $V_t \subset V$ for {\em full-batch} training.
In $L$-layer full-batch training, the total number of vertices that
needs to be processed is $L |V|$.

\subsection{Minibatching in GNNs}

In minibatch training, a random subset of training vertices, called {\em Seed
Vertices}, is selected, and training is done over the (sampled) subgraph
composed of $L$-hop neighborhood of the seed vertices.
On each iteration, minibatch training computes the loss on seed vertices, which are random subsets
of the training set $V_t$.

Given a set of vertices $S$, we define $l$-th layer expansion set, or the $l$-hop neighborhood $S^l$ as:
\begin{equation}
    \label{eq:s_l}
    S^0 = S,\ \ \  S^{(l + 1)} = S^l \cup N(S^l)
\end{equation}
For GNN computations, $S^l$ would also denote the set of
the required vertices to compute~(\ref{eq:gnn}) at each layer $l$.
Using the same notation, $\{ s \}^l$ denotes $l$-layer expansion set
starting from single vertex $s\in V$.

For a single minibatch iteration, the
total number of vertices that need to be processed is $\sum_{l=1}^L |S^l|$.
There are $\frac{|V|}{|S^0|}$ minibatches assuming $V_t = V$.
Since each $|S^l| \geq |S^0|$, and a single epoch of minibatch training needs to
go over the whole dataset, the work $W(|S^0|)$ for a single epoch is:
\begin{equation}
    \label{eqc:work_definition}
    W(|S^0|) = \frac{|V|}{|S^0|}\sum_{l=1}^L E[|S^l|] \geq \frac{|V|}{|S^0|}\sum_{l=1}^L |S^0| = L |V|
\end{equation}
where $E[|S^l|]$ is the expected number of sampled vertices in layer $l$ and
$|S^0|$ is the batch size.
That is, the total amount of work to process a single epoch increases over full-batch training.
The increase in work due to minibatch training is thus
encoded in the ratios $\frac{E[|S^l|]}{|S^0|}, 1 \leq l \leq L$.

When sampling is used with minibatching, the minibatch subgraph may potentially
become random.
However, the same argument for the increasing total amount of work
holds for them too, as seen in~\cref{figc:num_input_nodes}.

\subsection{Graph Sampling}
\label{subsecc:graph_sampling}


We focus on samplers whose expected number of sampled vertices is a
function of the batch size. Neighbor Sampling (NS)~\citep{neighborsampling}, LABOR~\citep{Balin23-NeurIPS} and RandomWalk (RW) Sampling~\citep{pinsage} all have this property and they are all applied recursively for GNN models with multiple layers. \cref{subsecc:graph_sampling_background} describes the details of these sampling methods.

\begin{SCfigure}
    \centering
    \includegraphics[width=0.45\linewidth]{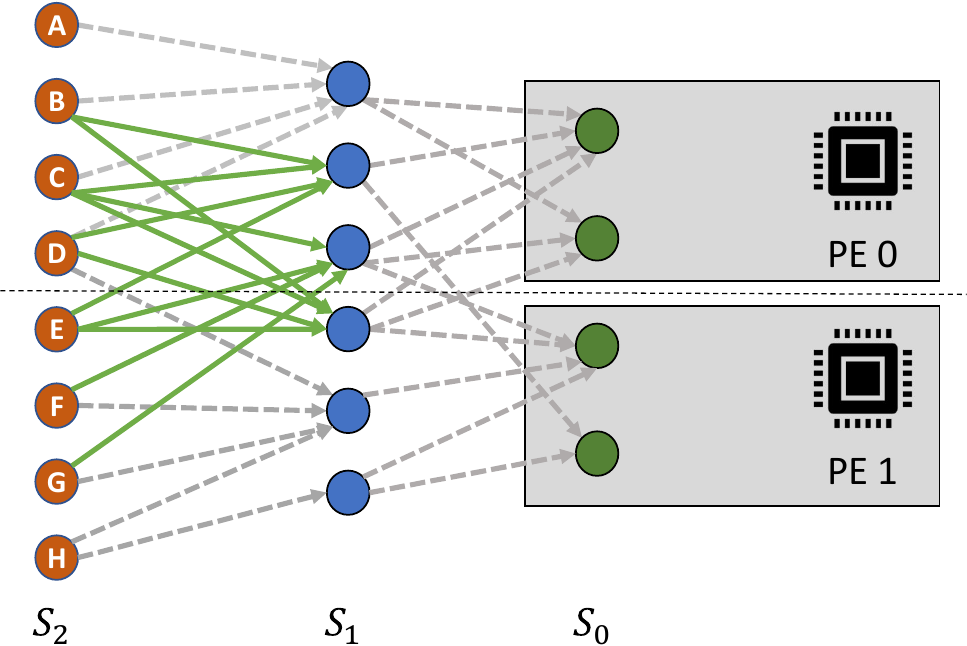}
    \caption{\small Minibatches of two processing elements may share edges in the
    second layer and vertices starting in the first layer. For independent minibatching,
    the solid green edges shared by both processing elements represent
    duplicate work, and input nodes B through G are duplicated along with the
    directed endpoints of green edges. However for cooperative
    minibatching, the vertices and edges are partitioned across the PEs with no duplication,
    and the edges crossing the line between the two PEs necessitate communication.}
    \label{figc:duplicate_work}
    \vspace*{-3ex}
\end{SCfigure}

\subsection{Independent Minibatching}

Independent minibatching is commonly used in multi-GPU, and distributed GNN
training frameworks~\citep{dsp2023, gandhi2021p3, pagraph, zheng2021distributed, aligraph} to
parallelize the training and allows scaling to larger problems.
Each Processing Element (PE, e.g., GPUs, CPUs, or cores of multi-core CPU),
starts with their own $S^0$ of size $b$ as the seed vertices, and
compute $S^1, \dots, S^L$ along with the sampled edges to generate minibatches (see~\cref{figc:duplicate_work}).
Computing $S^1, \dots, S^L$ depends on the chosen sampling algorithm, such as the ones explained in~\cref{subsecc:graph_sampling_background}.
It has the advantage that doing a forward/backward pass does
not involve any communication with other PEs after the initial minibatch
preparation stage at the expense of duplicate work. 

\section{Cooperative Minibatching}
\label{secc:cooperative_minibatching}

In this section, we present two theorems that show the work of an epoch
will be monotonically nonincreasing with increasing batch sizes.
After that, we propose two algorithms that can take advantage of this monotonicity.
Due to lack of space, we provide their full proofs in~\cref{subsecc:work_monotonicity,subsecc:overlap_monotonicity}.

\begin{theorem}
The work per epoch $\frac{E[|S^l|]}{|S^0|}$ required to train a GNN model using minibatch training is
monotonically nonincreasing as the batch size $|S^0|$ increases.
\label{th:work_monotonicity}
\end{theorem}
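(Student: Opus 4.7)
The plan is to show that $h(b) := \mathbb{E}[|S^l|]$, viewed as a function of the batch size $b = |S^0|$, is concave in $b$ and satisfies $h(0) = 0$; the chord inequality then yields $h(a)/a \geq h(b)/b$ for $0 < a < b$, which is exactly the claimed monotonicity of $\mathbb{E}[|S^l|]/|S^0|$.

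First I would reorganize the sampler's randomness so that it is attached to individual vertices rather than to minibatches: for NS this means pre-drawing the reservoir variates for each $s \in V$, for LABOR the per-source variates $r_t$, and for RandomWalk the outgoing walk steps from each vertex. Under this coupling, for every fixed realization $\omega$ of the sampler's random bits and every $s \in V$, the sampled $l$-hop neighborhood $\{s\}^l_\omega$ is a deterministic subset of $V$, so for any seed set $S^0$ one has $S^l = \bigcup_{s \in S^0} \{s\}^l_\omega$. Hence $S^0 \mapsto |S^l|$ is a coverage function on $V$, and in particular submodular; taking expectation over $\omega$ preserves this, so $g(S^0) := \mathbb{E}_\omega[|S^l|]$ is a nonnegative submodular set function with $g(\emptyset) = 0$. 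A standard fact then says that for such $g$, the average $b \mapsto \mathbb{E}_{|S^0|=b}[g(S^0)]$ over uniformly random size-$b$ subsets is concave in $b$: a short proof uses a uniform random permutation $v_1,\dots,v_n$ of $V$ and $A_k := \{v_1,\dots,v_k\}$, applies submodularity in the form $g(A_b \cup \{v_{b+2}\}) - g(A_b) \geq g(A_{b+1} \cup \{v_{b+2}\}) - g(A_{b+1})$, and exploits exchangeability of $v_{b+1}$ and $v_{b+2}$ to obtain $h(b+1) - h(b) \geq h(b+2) - h(b+1)$.

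The main obstacle is making the coupling precise enough that the coverage-function representation $S^l = \bigcup_{s \in S^0}\{s\}^l_\omega$ really holds simultaneously for all $S^0$, so that submodularity transfers from the deterministic per-$\omega$ coverage function to the sampler-averaged $g$. A more computational alternative that bypasses submodularity is to write $\mathbb{E}[|S^l|] = \sum_{v \in V}\left(1 - \binom{n - r_v}{b}/\binom{n}{b}\right)$, where $r_v := |\{s : v \in \{s\}^l_\omega\}|$, and verify concavity of each summand directly via the ratio identity $\left(\binom{n - r_v}{b+1}/\binom{n}{b+1}\right) / \left(\binom{n - r_v}{b}/\binom{n}{b}\right) = (n - r_v - b)/(n - b)$, which is decreasing in $b$ whenever $r_v \geq 1$.
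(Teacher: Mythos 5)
Your argument is correct, and it reaches the conclusion by a genuinely different route than the paper. The paper works directly with leave-one-out marginals: it shows that $\sum_{s \in S^0}\left(|S^l| - |S'^l|\right)$, with $S'^0 = S^0\setminus\{s\}$, equals the number of vertices of $S^l$ covered by a unique seed and hence is at most $|S^l|$, rearranges this into $\frac{|S^l|}{|S^0|} \leq \frac{E[|S'^l|]}{|S^0|-1}$, and averages over the random draw of $S^0$; chaining consecutive batch sizes gives the theorem. You instead package the same combinatorial fact as submodularity: once the sampler's randomness is attached to vertices, $S^0 \mapsto |S^l|$ is a coverage function, its $\omega$-average $g$ is monotone submodular with $g(\emptyset)=0$, and the standard permutation/exchangeability argument gives concavity of $h(b)$, after which $h(b)/b$ nonincreasing follows from the chord inequality. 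The two are really the same observation at different levels of abstraction — the paper's uniquely-covered set $T_l(S^0)$ is exactly the certificate that a coverage function is submodular — but your packaging buys more: it delivers \cref{th:overlap_monotonicity} (concavity of $E[|S^l|]$) in the same stroke, whereas the paper proves that separately in \cref{subsecc:overlap_monotonicity} via a more intricate recursion through $T_l$ and $T_2^l$. The coupling issue you flag is real but is precisely the assumption the paper also makes implicitly when it writes $S'^l \subset S^l$ and decomposes $S^l$ as $S'^l \cup \{s\}^l$ for sampled neighborhoods. One small caveat on your hypergeometric alternative: decreasingness of the ratio $\frac{n-r_v-b}{n-b}$ is not by itself sufficient for concavity of $1 - \binom{n-r_v}{b}/\binom{n}{b}$ (a decreasing log-concave sequence need not be convex); what you actually need, and what the identity does give when $r_v \geq 1$, is the bound $\frac{n-r_v-b}{n-b} \leq \frac{n-b-1}{n-b}$, from which the required second-difference inequality follows.
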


\begin{proof} (Idea)
Given any random $S^0$ and $l \geq 1$, we have the corresponding set $S^l$.
We can shrink $S^0$ by removing random elements $s \in S^0$ from it to get
$S^0 - \{s\}$. If we compare $(S^0 - \{s\})^l$ to $S^l$, we will see that any 
difference will come from elements that are contained only in $\{s\}^l$. If we count
such elements for each $s \in S^0$ and sum them, it will be bounded by $|S^l|$.
More concretely if we let $S'^0 = S^0 - \{s\}$ be a random subset of $S^0$ with one 
element missing, we have:
\begin{equation}
    \sum_{s \in S^0} |S^l| - |(S^0 - \{s\})^l| \leq |S^l| \iff |S^0||S^l| - |S^l| \leq \sum_{s \in S^0} |(S^0 - \{s\})^l| \iff |S^l||S'^0| \leq |S^0|E[|S'^l|]
\end{equation}
\end{proof}

Empirical evidence can be seen in~\cref{figc:num_input_nodes,figc:num_input_nodes_2}. In
fact, the decrease is related to the cardinality of the following set:
\begin{equation}
    \label{eq:T_1_S}
    T^l(S) = \{w \in S^l \mid w \in \{s\}^l, \exists! s \in S^0\}
\end{equation}

When $T(S^0)$ is equal to $S^l$, the work is equal as well. In the next section,
we further investigate the effect of $|T(S^0)|$ on $E[|S^l|]$.

In addition to the definition of $T(S)$ above, if we define the following set $T_2(S)$:
\begin{equation}
    \label{eq:T_2_S}
    T_2^l(S) = \{w \in S^l \mid w \in \{s\}^l \cap \{s'\}^l, \exists! \{s, s'\} \subset S^0\}
\end{equation}

\begin{theorem}
The expected subgraph size $E[|S^l|]$ required to train a GNN model using minibatch training is a
concave function of batch size, $|S^0|$.
\label{th:overlap_monotonicity}
\end{theorem}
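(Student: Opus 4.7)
The plan is to reduce the theorem to a per-vertex inclusion probability statement and use a coupling so that concavity in $b := |S^0|$ comes from a one-parameter family of elementary functions. Write
\[
E[|S^l|] \;=\; \sum_{v\in V} \Pr[v\in S^l],
\]
so it suffices to show each summand $\Pr[v\in S^l]$ is concave in $b$; nonnegative sums preserve concavity.

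Next, I would introduce a coupling that removes all the sampler's internal randomness from the dependence on $S^0$. For NS, LABOR-0 and LABOR-*, draw all the per-edge (or per-source-vertex) uniform variates once and for all; the sets $\{s\}^l$ defined by~(\ref{eq:s_l}) are then deterministic functions of these variates, and independent of which seeds are drawn. For the random-walk sampler, draw the walks from every possible starting vertex in advance. Conditioning on this entire collection $\omega$ of variates, define the \emph{reverse $l$-neighborhood} $R_l(v;\omega) := \{s\in V : v\in \{s\}^l\}$. Then the event $\{v\in S^l\}$ is exactly $\{S^0\cap R_l(v;\omega)\ne\varnothing\}$, and it remains to show that $b\mapsto \Pr[S^0\cap R\ne\varnothing\mid\omega]$ is concave in $b$ for every fixed set $R\subseteq V$, after which averaging over $\omega$ preserves concavity.

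The heart of the proof is this last inclusion-probability claim. Under uniform with-replacement seed sampling,
\[
\Pr[S^0\cap R\ne\varnothing\mid\omega] \;=\; 1-(1-|R|/|V|)^b,
\]
and a direct second-derivative calculation gives $-(1-q)^b(\log(1-q))^2\le 0$, so the function is concave in $b$. Under uniform without-replacement sampling the corresponding expression is $1-\binom{|V|-|R|}{b}/\binom{|V|}{b}$; writing $f(b):=\binom{|V|-|R|}{b}/\binom{|V|}{b}$ and using the recursion $f(b+1)/f(b) = (|V|-|R|-b)/(|V|-b)$, I would check discrete convexity of $f$ by showing the successive differences $f(b+1)-f(b) = -f(b)\cdot |R|/(|V|-b)$ are monotonically increasing (i.e.\ becoming less negative) whenever $|R|\ge 1$, which reduces to the elementary inequality $|R|\ge 1$; the trivial case $|R|=0$ gives a constant. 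This yields discrete concavity of $1-f(b)$.

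The main obstacle is the coupling step: making sure the argument applies uniformly to all three samplers in~\cref{subsecc:graph_sampling}, since the random variates live in different places (edges for NS, source vertices for LABOR, walks for RandomWalk). Once every internal random choice is realized independently of $S^0$, the remainder is the clean two-line inclusion-probability calculation above, and \cref{th:work_monotonicity} follows as a corollary since division of a concave nonnegative function by $|S^0|$ with $E[|S^l|]\ge |S^0|$ at $b=1$ and the concavity-plus-positivity implies monotone nonincreasing $E[|S^l|]/|S^0|$.
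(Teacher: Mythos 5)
Your proof is correct, but it takes a genuinely different route from the paper's. The paper works directly with the set sizes: it introduces $T_l(S^0)$, the vertices of $S^l$ reachable from exactly one seed, derives the recursion $E[|S^l|] = E[|S'^l|] + E[|T_l(S^0)|]/|S^0|$ by a leave-one-out counting argument, and then shows the increments $E[|T_l(V_0^i)|]/i$ are nonincreasing in $i$ (via a second set $T_2^l$ counting vertices reached by exactly one pair of seeds), which yields discrete concavity. You instead linearize via $E[|S^l|]=\sum_{v}\Pr[v\in S^l]$, freeze the sampler's internal randomness so that the event $\{v\in S^l\}$ becomes $\{S^0\cap R_l(v;\omega)\ne\varnothing\}$ for a fixed reverse neighborhood $R$, and reduce everything to concavity in $b$ of the hitting probability $1-\binom{|V|-|R|}{b}/\binom{|V|}{b}$, which your difference computation establishes correctly. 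Your coupling assumption --- that $\{s\}^l$ can be realized independently of which seeds are drawn, so that $S^l=\bigcup_{s\in S^0}\{s\}^l$ conditionally on $\omega$ --- is exactly the assumption the paper uses implicitly when it writes $\sum_{w\in S^l}\mathbbm{1}[w\notin S'^l]=\sum_{w\in\{s\}^l}\mathbbm{1}[w\notin S'^l]$, so you are not assuming more than the paper does, though you are right that it deserves to be checked sampler by sampler. What each approach buys: yours is more modular and arguably more elementary, isolating all sampler-specific reasoning in the coupling and reducing concavity to a one-line fact about inclusion probabilities (and it covers with-replacement seed sampling for free); the paper's approach yields an explicit and interpretable increment formula, $E[|S^l|]=\sum_{i=1}^{|S^0|}E[|T_l(V_0^i)|]/i$, identifying the marginal gain of an extra seed with the expected number of uniquely reached vertices, which the paper then reuses in its discussion. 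One small slip at the end: the hypothesis you need to deduce \cref{th:work_monotonicity} from concavity is $E[|S^l|]\ge 0$ at $|S^0|=0$ (trivially an equality), not $E[|S^l|]\ge|S^0|$ at $|S^0|=1$; with that correction the secant-slope argument goes through.
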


\begin{proof} (Idea)
Note that we have:
\begin{gather*}
    |T^l(S^0)| - 2|T_2^l(S^0)| = \sum_{\substack{S'^0 \subset S^0 \\ |S'^0| + 1 = |S^0|}} |T^l(S^0)| - |T^l(S'^0)| = |S^0| |T^l(S^0)| - |S^0| E[|T^l(S'^0)|] \\
    \implies (|S^0| - 1) |T^l(S^0)| \leq |S^0| E[|T^l(S'^0)|] \implies \frac{|T^l(S^0)|}{|S^0|} \leq \frac{E[|T^l(S'^0)|]}{|S'^0|}
\end{gather*}
We have just shown that $\frac{E[|T^l(S^0_i)|]}{i}$ is monotonically nonincreasing as $i$ increases. If we let $S^0_i$ denote a random subset of $S^0$ of size $i$, we have:
\begin{equation}
    E[|S^l|] = \sum_{i=1}^{|S^0|} \frac{E[|T^l(S^0_i)|]}{i}
\end{equation}
As $E[|S^l|]$ is the sum of monotonically nonincreasing parts, we conclude that
it is a concave function of the batch size, $|S^0|$.
\end{proof}

So, the slope of the expected number of sampled vertices flattens as batch size
increases, see the last row in~\cref{figc:num_input_nodes} and the first row in~\cref{figc:num_input_nodes_2}. Note that
this implies work monotonicity as well.

\subsection{Cooperative Minibatching}
\label{subsecc:cooperating_pes}

As explained in~\cref{secc:background}, Independent Minibatching can not take advantage of
the reduction in work with increasing global batch sizes and number of PEs, because
it uses separate small, {\em local}, batches of sizes $b$ on each PE for each step of training.
On the other hand, one can also keep the {\em global} batch size constant, $b P = |S^0|$,
and vary the number of processors $P$. As $P$ increases, Independent Minibatching will perform more and more
duplicate work because the local batch size is a decreasing function, $b=\frac{|S^0|}{P}$, of $P$.

\begin{algorithm}
   \caption{Cooperative minibatching}
   \label{alg:cooperative_minibatching}
\begin{algorithmic}
   \STATE {\bfseries Input:} seed vertices $S^0_p$ for each PE $p \in P$, \# layers $L$
   \FORALL[Sampling]{$l \in \{0, \dots, L - 1\}$}
     \FORALLP{$p \in P$}
        \STATE Sample next layer vertices $\tilde{S}^{l+1}_p$ and edges $E^{l}_p$ for $S^{l}_p$
        \STATE all-to-all to redistribute vertex ids for $\tilde{S}^{l+1}_p$ to get $S^{l+1}_p$
     \ENDFOR
   \ENDFOR{}
   \FORALLP[Feature Loading]{$p \in P$}
    \STATE Load input features $H^L_p$ from Storage for vertices $S^L_p$
    \STATE all-to-all to redistribute $H^L_p$ to get $\tilde{H}^L_p$
   \ENDFOR
   \FORALL[Forward Pass]{$l \in \{L - 1, \dots, 0\}$}
     \FORALLP{$p \in P$}
        \IF{$l + 1 < L$}
            \STATE all-to-all to redistribute $H^{l+1}_p$ to get $\tilde{H}^{l+1}_p$
        \ENDIF
        \STATE Forward pass on bipartite graph $\tilde{S}^{l+1}_p \to S^l_p$ with edges $E^l_p$, input $\tilde{H}^{l+1}_p$ and output $H^l_p$
     \ENDFOR
   \ENDFOR{}
   \FORALLP{$p \in P$}
    \STATE Compute the loss and initialize gradients $G^0_p$
   \ENDFOR
   \FORALL[Backward Pass]{$l \in \{0, \dots, L - 1\}$}
     \FORALLP{$p \in P$}
        \STATE Backward pass on bipartite graph $S^l_p \to \tilde{S}^{(l+1)}_p$ with edges $E^l_p$, input $G^l_p$ and output $\tilde{G}^{l+1}_p$
        \IF{$l + 1 < L$}
            \STATE all-to-all to redistribute $\tilde{G}^{l+1}_p$ to get $G^{l+1}_p$
        \ENDIF
     \ENDFOR
   \ENDFOR{}
\end{algorithmic}
\end{algorithm}

Here, we propose the {\em Cooperative Minibatching} method that will take
advantage of the work reduction with increasing batch sizes in multi-PE settings.
In Cooperative Minibatching, a single global batch of size $b P$ will be processed by all
the $P$ PEs in parallel, eliminating any redundant work across PEs.

We achieve this as follows: we first partition the graph in 1D fashion by logically assigning
each vertex and
its incoming edges to PEs as $V_p$ and $E_p$ for each PE $p$. Next, PE $p$ samples its 
batch seed vertices $S^l_p$ from the training
vertices in $V_p$ for $l=0$ of size $b$. Then using any sampling algorithm,
PE $p$ samples the incoming edges $E^l_p$ from $E_p$ for its seed vertices. Each PE then computes
the set of vertices sampled $\tilde{S}^{l+1}_p = \{ t \mid (t \to s) \in E^l_p \}$.
Note that, $\tilde{S}^{l+1}_p$ has elements residing on different PEs. The PEs exchange the vertex ids 
$\tilde{S}^{l+1}_p$ so that each PE receives the set $S^{l+1}_p \in V_p$. This process is repeated recursively for GNN models 
with multiple layers by using $S^{l+1}_p$ as the seed vertices for the next layer. The
exchanged information is cached to be used during the forward/backward passes.

For the forward/backward passes, the same communication pattern used during cooperative sampling is
used to send and receive
input and intermediate layer embeddings before each GNN layer invocation.
\cref{alg:cooperative_minibatching} details cooperative sampling
and cooperative forward/backward passes for a single GNN training iteration. Independent 
minibatching works the same except that it lacks the all-to-all operations and has 
$\tilde{A}^{l+1}_p = A^{l+1}_p$ for any given variable $A$ instead.
The redistribution of vertices during sampling happens according to the
initial graph partitioning and the rest of the redistribution operations follow
the same communication pattern, always converting a variable $\tilde{A}_p^{l+1}$ into
$A_p^{l+1}$ during the forward pass and $A_p^{l+1}$ into
$\tilde{A}_p^{l+1}$ during sampling and the backward passes for any variable $A$.
Cooperative Minibatching was first proposed in~\cite{cooperative_minibatching_aug_2022, cooperative_minibatching_jan_2023} 
and was later re-discovered in~\cite{polisetty2023gsplit} as Cooperative Training, which was used
during the feature loading and the forward/backward passes of their work while sampling of each
minibatch was performed on a single CPU thread and the edges were split among the GPUs.
We refer the reader to~\cref{subsecc:redundancy_free_connection}
to see the relation between the approach proposed here and the work of~\cite{jia2020} on
redundancy-free GCN aggregation.

\begin{table*}[ht!]
    \caption{Algorithmic complexities of different stages of GNN training at layer $0 \leq l < L$ with $L$ total layers and batch size $B = |S^0|$ with $P$ PEs. Note that $|S^l_p(B)| = |S^l(B)|\frac{1}{P}$, $|E^l_p(B)| = |E^l(B)|\frac{1}{P}$ since the PEs process the partitioned subgraphs. Feature loading happens only at layer $L$.}
    \label{tabc:complexity}
    \begin{adjustbox}{width=\linewidth,center}
    \begin{tabular}{c | r | r}
    \toprule
    \textbf{Stage} & \textbf{Independent} & \textbf{Cooperative} \\
    \midrule
    Sampling & $\mathcal{O}(|S^l(\frac{B}{P})|\frac{1}{\beta})$ & $\mathcal{O}(|S^l_p(B)|\frac{1}{\beta} + |\tilde{S}^{l+1}_p(B)|\frac{c}{\alpha})$ \\
    Feature loading & $\mathcal{O}(|S^L(\frac{B}{P})|\frac{d\rho}{\beta})$ & $\mathcal{O}(|S^L_p(B)|\frac{d\rho}{\beta} + |\tilde{S}^L_p(B)|\frac{dc}{\alpha})$ \\
    Forward/Backward & $\mathcal{O}(M(S^l(\frac{B}{P}), E^l(\frac{B}{P}), S^{l+1}(\frac{B}{P}))\frac{d}{\gamma})$ & $\mathcal{O}(M(S^l_p(B), E^l_p(B), \tilde{S}^{l+1}_p(B))\frac{d}{\gamma} + |\tilde{S}^{l+1}_p(B)|\frac{dc}{\alpha})$ \\
    \bottomrule
    \end{tabular}
    \end{adjustbox}
\end{table*}

\textbf{Complexity Analysis:} Let $M(V_1, E, V_2)$ denote the work to process a bipartite graph $V_2 \to V_1$ with edges 
$E$ for a given GNN model $M$. Assuming cross PE communication bandwidth $\alpha$, Storage (e.g., disk, network, or DRAM) to 
PE bandwidth as $\beta$ and PE memory bandwidth $\gamma$, and cache miss rate $\rho$, we have 
the time complexities given in~\cref{tabc:complexity} to perform different stages of GNN training per PE.
We also use $d$ for embedding 
dimension and $c < 1$ for the cross edge ratio, note that $c \approx \frac{P - 1}{P}$ for random 
partitioning, and smaller for smarter graph partitioning with $P$ standing for the number of PEs. Also the sizes of $\tilde{S}^l$ 
become smaller when graph partitioning is used due to increased overlap.

We see that $\gamma \approx 2\text{ TB/s}$ , $\alpha \approx 300\text{ GB/s}$ 
and $\beta \approx 30\text{ GB/s}$ in today's modern multi-GPU systems~\citep{dgxa100}. Due to $\alpha$ being relatively fast compared to
$\frac{\gamma}{M}$ and $\beta$, our approach brings performance benefits. On newer systems, the
all-to-all bandwidth continues to increase~\citep{dgxh100}, decreasing the cost of cooperation on a global mini-batch. However, on systems where the interconnect does not provide full bandwidth for all-to-all operations, our approach is limited in the speedup it can provide.
Our approach is most applicable for
systems with relatively fast all-to-all bandwidth $\frac{\alpha}{c}$ compared to $\frac{\gamma}{M}$ and $\beta$ and 
large $P$. In particular, starting from $P=2$, cooperative starts to outperform independent even on F/B with the 
mag240M dataset and the R-GCN model in~\cref{subsecc:cooperative_minibatching_experiment,tabc:cooperative_runtimes,tabc:cooperative_speedup}.
More discussion on this topic can be found in~\cref{subsecc:complexity_analysis}.

\subsection{Cooperative Dependent Minibatching}
\label{subsecc:dependent_batches}

Just as any parallel algorithm can be executed sequentially, we can reduce the number of distinct data accesses by having a
single PE process $b$-sized parts of a single $\kappa b$-sized minibatch for $\kappa$ iterations.
In light of~\cref{th:work_monotonicity,th:overlap_monotonicity}, consider doing the following: choose $\kappa \in \mathbb{Z}^+$,
then sample a batch $S^0$ of size $\kappa b$, i.e., $\kappa b=|S^0|$ to
get $S^0, \dots, S^L$. Then sample $\kappa$ minibatches $S^0_i$, of size $b=|S^0_i|$ from
this batch of size $\kappa b$ to get $S^0_i, \dots, S^L_i, \forall i \in \{0,
\dots, \kappa - 1\}$. In the end, all of the input
features required for these minibatches will be a subset of the input features of the large batch, i.e. $S^j_i \subset S^j, \forall i, j$. This means that the
collective input feature requirement of these $\kappa$ batches will be $|S^L|$,
the same as our batch of size $\kappa b$. Hence, we can now take advantage
of the concave growth of the work in~\cref{th:overlap_monotonicity,figc:num_input_nodes}.


Note that, if one does not use any sampling algorithms and proceeds to use the
full neighborhoods, this technique will not give any benefits, as by definition,
the $l$-hop neighborhood of a batch of size $\kappa b$ will always be equal to the
union of the $l$-hop neighborhoods of batches of sizes $b$. However for
sampling algorithms, any overlapping vertex sampled by any two batches of
sizes $b$ might end up with different random neighborhoods resulting in a
larger number of sampled vertices. Thus, having a single large batch
ensures that only a single random set of neighbors is used for any vertex
processed over a period of $\kappa$ batches.

The approach described above has a nested iteration structure and the
minibatches part of one $\kappa$ group will be significantly different than
another group and this might slightly affect convergence. In~\cref{subsecc:smoothed_dependent_minibatching}, we propose
an alternative smoothing approach that does not require two-level nesting and still takes advantage of
the same phenomenon for the NS and LABOR sampling algorithms.

\begin{figure}
    \centering
    \begin{subfigure}{0.30\linewidth}
    \includegraphics[width=\linewidth,trim=3cm 1cm 3cm 1cm,clip]{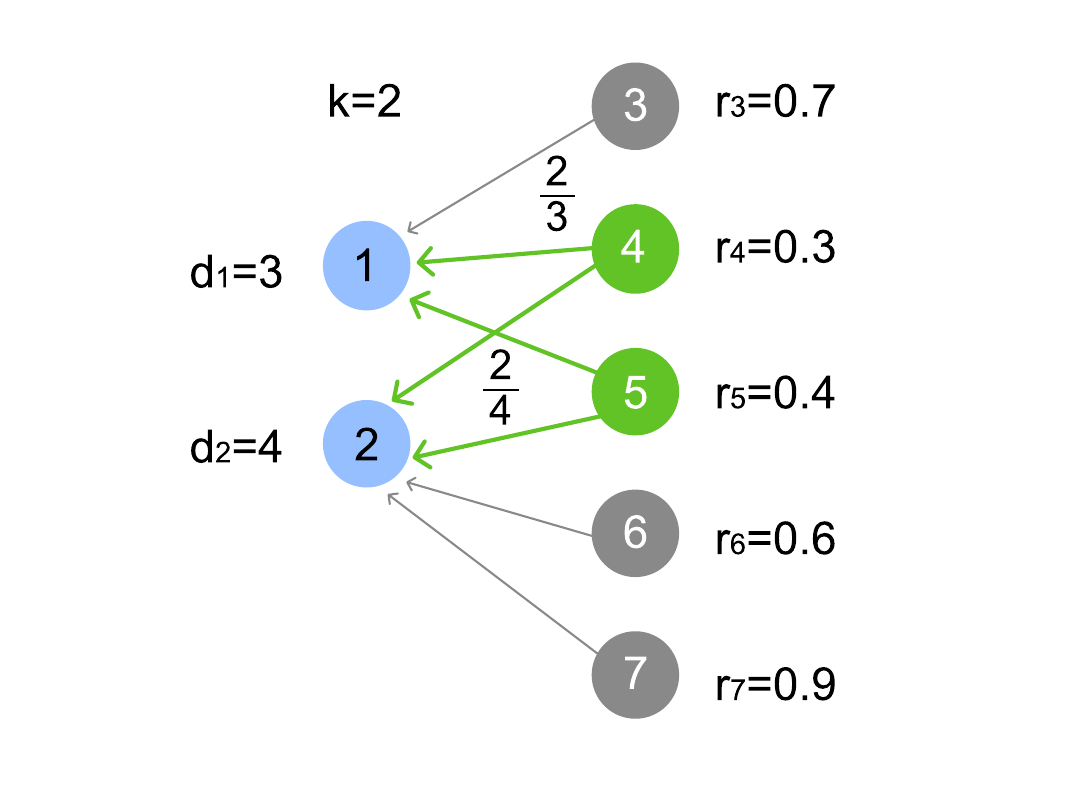}
    \end{subfigure}
    \hspace*{2ex}
    \begin{subfigure}{0.30\linewidth}
    \includegraphics[width=\linewidth,trim=3cm 1cm 3cm 1cm,clip]{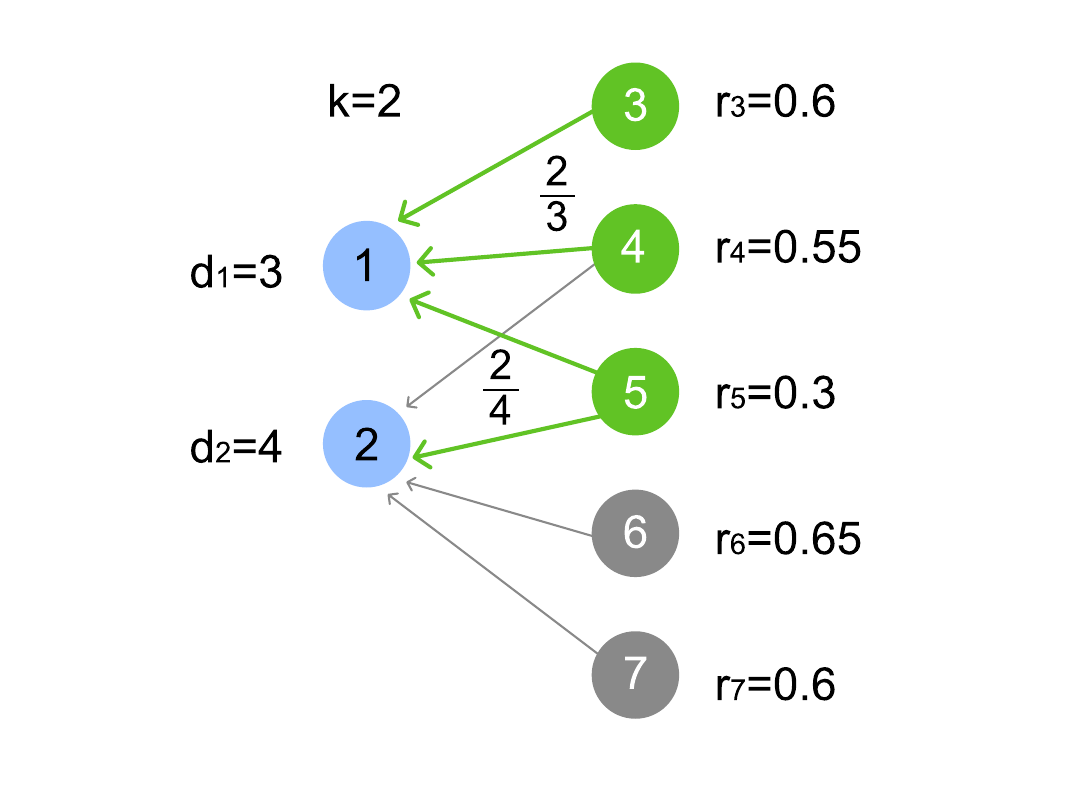}
    \end{subfigure}
    \hspace*{2ex}
    \begin{subfigure}{0.30\linewidth}
    \includegraphics[width=\linewidth,trim=3cm 1cm 3cm 1cm,clip]{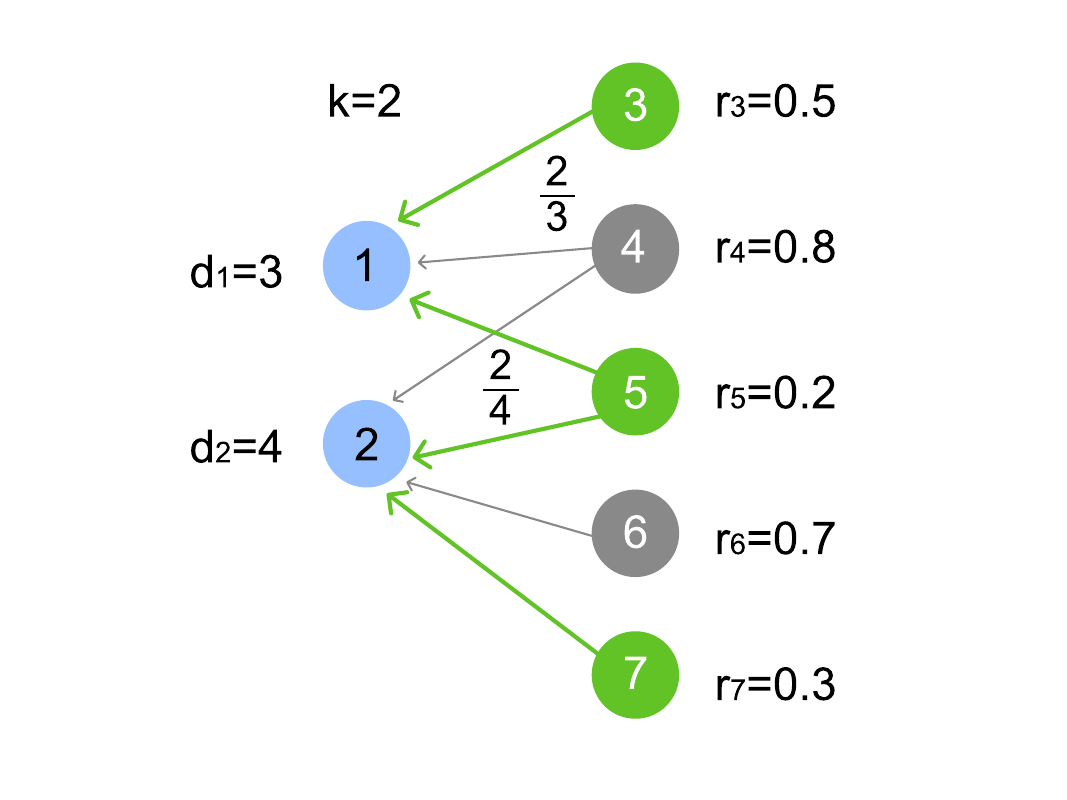}
    \end{subfigure}
    \vspace*{-1ex}
    \caption{A smoothed dependent minibatching example for $\kappa=2$. The middle minibatch is interpolated between the two independent minibatches on the left and the right by interpolating the random numbers used during sampling.}
    \label{figc:dependent_minibatching_example}
\end{figure}

The main idea of our smoothing approach is as follows: each time one samples the neighborhood of a vertex,
normally it is done independently of any previous 
sampling attempts. If one were to do it fully dependently, then one would end up with an identical sampled 
neighborhood at each sampling attempt. What we propose is to do something inbetween, so that the sampled
neighborhood of a vertex changes slowly over time. The speed of change in the sampled
neighborhoods is $\frac{1}{\kappa}$, and after every $\kappa$ iterations, one gets fully independent new 
random neighborhoods for all vertices, see~\cref{figc:dependent_minibatching_example}.
We will experimentally evaluate the locality benefits and the overall effect of this
algorithm on convergence in~\cref{subsecc:dependent_batches_experiment,subsecc:cooperative_dependent_batches}, and
more details on our smoothing approach are discussed in~\cref{subsecc:smoothed_dependent_minibatching}.

\subsection{Relation to subgraph sampling methods}
\label{subsecc:relation_to_subgraph_sampling}

Existing subgraph sampling methods~\cite{Chiang2019, graphsaint-iclr20, Hu2020a, shaDow, pmlr-v139-fey21a, shi2023lmc} randomly sample a subset $S$ of the vertices and
use the same subset for all layers of GNN training, $S = S^0 = \dots = S^L$. The
edges $S_E$ used for training are obtained by taking the vertex induced subgraph
$S_E = \{t \to s \mid (t \to s) \in E \land t, s \in S\}$. We utilize 
the same observation as \cref{th:work_monotonicity,th:overlap_monotonicity} and state:
\begin{theorem}
The expected sampled subgraph density $\frac{E[|S_E|]}{|S|}$ is nondecreasing as the batch size $|S|$ increases.
\label{th:subgraph_density_monotonicity}
\end{theorem}
Thus, having multiple PEs cooperate and work on a larger minibatch together instead
of processing independent smaller minibatches is theoretically expected to converge
faster, as the higher density leads to better approximations. The proof
of~\cref{th:subgraph_density_monotonicity} is provided
in~\cref{subsecc:subgraph_density_monotonicity}.

\section{Experimental Evaluation}
\label{secc:experiments}

We first compare how the work to process an epoch changes w.r.t. to the batch size to empirically
validate~\cref{th:work_monotonicity,th:overlap_monotonicity} for different graph sampling algorithms.
Next, we show how dependent batches introduced in~\cref{subsecc:dependent_batches}
benefits GNN training. We also show the runtime benefits of cooperative
minibatching compared to independent minibatching in the multi-GPU setting. Finally, we show
that these two techniques are orthogonal, can be combined to get multiplicative savings.
~\cref{tabc:dataset} lists details of the datasets we used in experiments. Details
on our experimental setup can be found in~\cref{subsecc:experimental_setup}.

\begin{table*}[h!]
\begin{center}
    \caption{Traits of datasets used in experiments: numbers of vertices,
      edges, avg. degree, features, cached vertex embeddings, and training,
      validation and test vertex splits. Last column has \# minibatches
      in an epoch during model training with 1024 batch size including validation.}
    \label{tabc:dataset}
   \begin{adjustbox}{width=\linewidth,center}
    \begin{tabular}{c | r | r | r | r | r | r | r}
    \toprule
    \textbf{Dataset} & \textbf{$\bm{|V|}$} & \textbf{$\bm{|E|}$} & \textbf{$\bm{\frac{|E|}{|V|}}$} & \textbf{\# feats.} & \textbf{cache size} & \textbf{train - val - test (\%)} & \textbf{\# minibatches} \\
    \midrule
    flickr & 89.2K & 900K & 10.09 & 500 & 70k & 50.00 - 25.00 - 25.00 & 65 \\
    yelp & 717K & 14.0M & 19.52 & 300 & 200k & 75.00 - 10.00 - 15.00 & 595 \\
    reddit & 233K & 115M & 493.56 & 602 & 60k & 66.00 - 10.00 - 24.00 & 172 \\
    papers100M & 111M & 3.2B & 29.10 & 128 & 2M & \ \  1.09 - \ \  0.11 - \ \  0.19 & 1300 \\
    mag240M & 244M & 3.44B & 14.16 & 768 & 2M & \ \  0.45 - \ \  0.06 - \ \  0.04 & 1215 \\
    \bottomrule
    \end{tabular}
    \end{adjustbox}
\end{center}
    \vspace*{-2ex}
\end{table*}

\subsection{Demonstrating monotonicity of work}

\begin{figure*}[ht!]
    \centering
    \includegraphics[width=\linewidth]{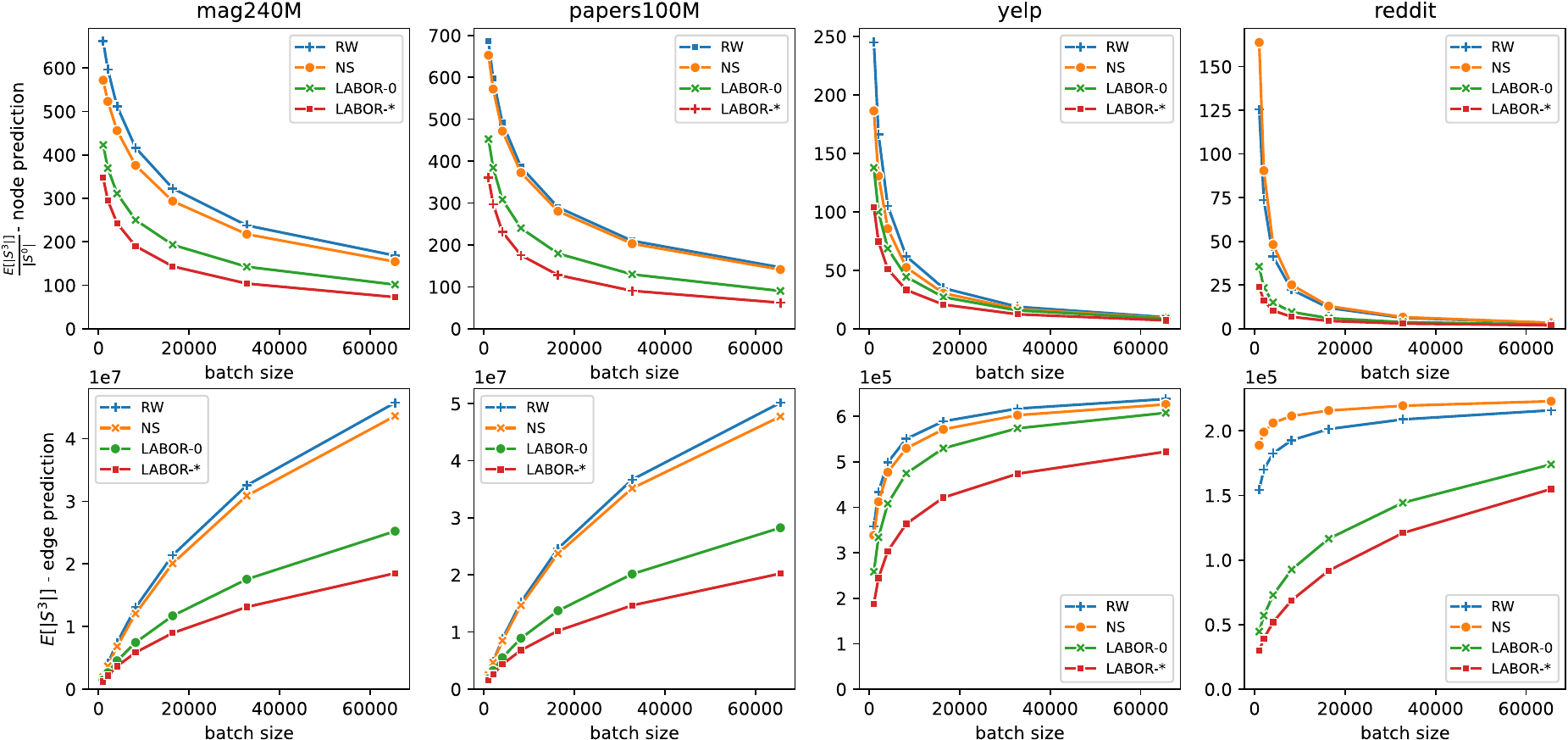}
    \vspace*{-3ex}
    \caption{Monotonicity of the work. x-axis shows the batch size, y-axis shows $\frac{E[|S^3|]}{|S^0|}$ (see~\cref{th:work_monotonicity})
    for node prediction (top row) and $E[|S^3|]$ (see~\cref{th:overlap_monotonicity})
    for edge prediction (bottom row), where $E[|S^3|]$ denotes
    the expected number of sampled vertices in the 3rd layer and
    $|S^0|$ denotes the batch size.
    RW stands for Random Walks, NS for Neighbor Sampling,
    and LABOR-0/* for the two different variants of
    the LABOR sampling algorithm described in~\cref{subsecc:graph_sampling}.
    }
    \label{figc:num_input_nodes}
\end{figure*}

We use three sampling approaches, NS, LABOR, and RW, to demonstrate that the work to process an epoch decreases as the batch size increases for the $L=3$ layer case across these three different classes of sampling algorithms.
We carried out our evaluation in two problem settings: node and edge prediction.
For node prediction, a batch of training vertices is sampled
with a given batch size. Then, the graph sampling algorithms described in~\cref{subsecc:graph_sampling_background}
are applied to sample the neighborhood
of this batch. The top row of~\cref{figc:num_input_nodes} shows how many input vertices is required on
average to process an epoch, specifically $\frac{E[|S^3|]}{|S^0|}$.
For edge prediction, we
add reverse edges to the graph making it undirected and
sample a batch of edges.
For each of these edges a random negative edge (an edge that is not part of $E$) with one endpoint
coinciding with the positive edge is sampled.
Then, all of the endpoints of these positive and
negative edges are used as seed vertices to sample their neighborhoods.
The bottom row of~\cref{figc:num_input_nodes} shows $E[|S^3|]$.

We can see that in all use cases, datasets and sampling algorithms, the work to process an epoch is
monotonically decreasing as we proved in~\cref{th:work_monotonicity}. We also see the plot of the expected number of vertices sampled, $E[|S^3|]$, is
concave with respect to batch size as we already know from~\cref{th:overlap_monotonicity}.

Another observation is that the concavity characteristic of $E[|S^3|]$ seems to differ for
different sampling algorithms. In increasing order of concavity we
have RW, NS, LABOR-0 and LABOR-*. The more concave a sampling algorithm's $E[|S^L|]$ curve is,
the less it is affected from the NEP and more savings are available through the use of the proposed
methods in~\cref{subsecc:cooperating_pes,subsecc:dependent_batches}. Note that the
differences would grow with a larger choice of layer count $L$.

\subsection{Dependent Minibatches}
\label{subsecc:dependent_batches_experiment}

\begin{figure*}[ht]
    \centering
    \includegraphics[width=\linewidth]{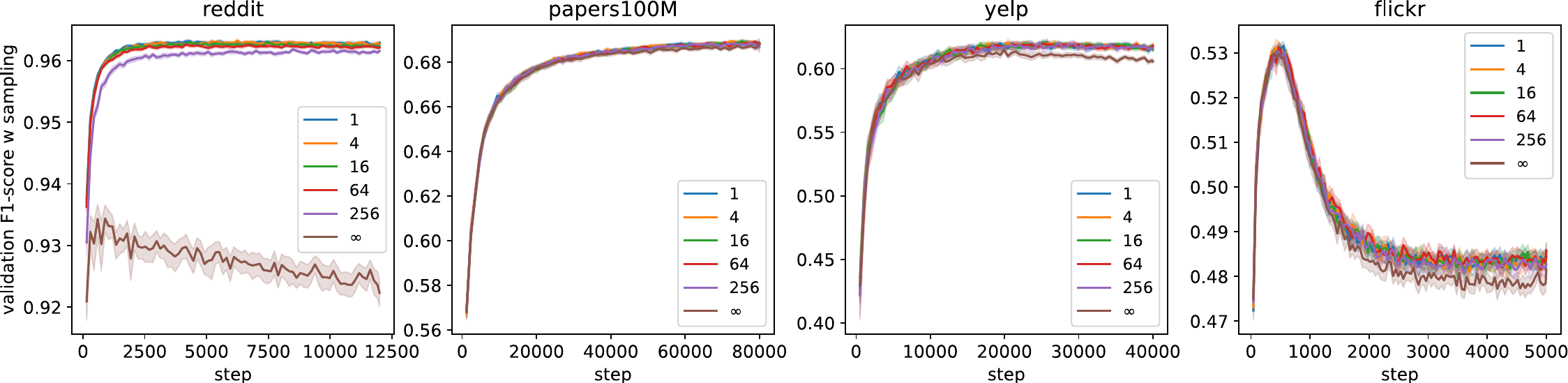}
    \vspace*{-4ex}
    \caption{The validation F1-score with NS sampled neighborhoods trained with the
    LABOR-0 sampling algorithm with $1024$ batch size and varying $\kappa$ dependent minibatches,
    $\kappa=\infty$ denotes infinite dependency, meaning the neighborhood
    sampled for a vertex stays static during training. See~\cref{figc:same_loss_cache_miss} for
    cache miss rates. See~\cref{figc:same_loss_cache_sampling} for
    the training loss and F1-score with the dependent sampler.}
    \label{figc:same_loss_cache}
    \vspace*{-2ex}
\end{figure*}

\begin{table}[ht]
\caption{Test F1-scores at the highest validation F1-score corresponding to~\cref{figc:same_loss_cache}, using early stopping. Averages of 40 runs are presented.}
\begin{small}
\begin{center}
\begin{tabular}{l|ccccc}
\toprule
\textbf{$\bm\kappa$} & \textbf{reddit} & \textbf{papers100M} & \textbf{yelp} & \textbf{flickr} \\
\midrule
1        & 96.72 $\pm$ 0.06 & 66.58 $\pm$ 0.17 & 63.40 $\pm$ 0.18 & 53.82 $\pm$ 0.21 \\
4        & 96.71 $\pm$ 0.05 & 66.60 $\pm$ 0.13 & 63.39 $\pm$ 0.19 & 53.87 $\pm$ 0.18 \\
16       & 96.70 $\pm$ 0.05 & 66.58 $\pm$ 0.20 & 63.42 $\pm$ 0.18 & 53.90 $\pm$ 0.17 \\
64       & 96.67 $\pm$ 0.05 & 66.55 $\pm$ 0.20 & 63.35 $\pm$ 0.18 & 53.88 $\pm$ 0.20 \\
256      & 96.65 $\pm$ 0.06 & 66.56 $\pm$ 0.14 & 63.35 $\pm$ 0.19 & 53.86 $\pm$ 0.24 \\
$\infty$ & 95.01 $\pm$ 0.30 & 66.46 $\pm$ 0.15 & 62.88 $\pm$ 0.18 & 53.84 $\pm$ 0.19 \\
\bottomrule
\end{tabular}
\end{center}
\end{small}
\label{tabc:same_loss_cache}
\vspace*{-2ex}
\end{table}

\begin{figure*}[ht]
    \centering
\begin{subfigure}{.48\linewidth}
    \includegraphics[width=1\linewidth]{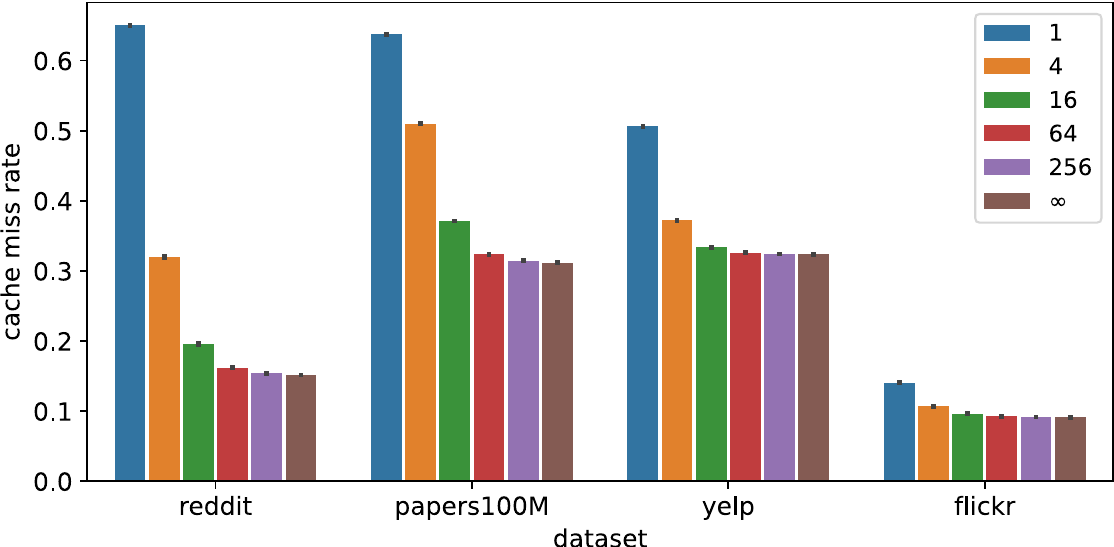}
    \vspace*{-2ex}
    \caption{1 GPU, cache sizes are listed in~\cref{tabc:dataset}.}
    \label{figc:same_loss_cache_miss}
\end{subfigure}
\hspace*{1em}
\begin{subfigure}{.48\linewidth}
    \centering
    \includegraphics[width=1\linewidth]{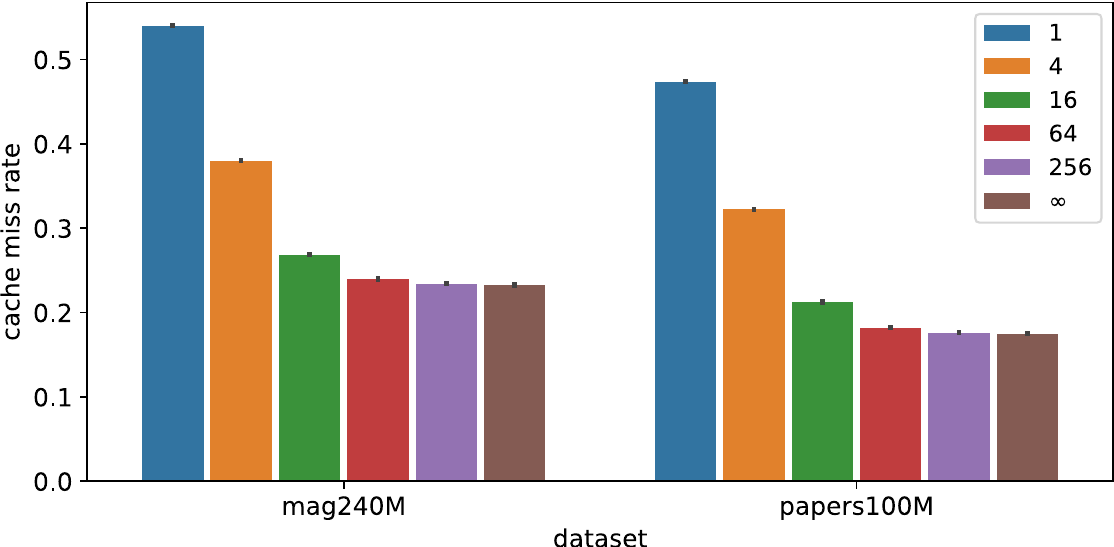}
    \vspace*{-2ex}
    \caption{4 cooperating GPUs, each having a cache of size 1M.}
    \label{figc:coop_dependent_cache_miss}
\end{subfigure}
\caption{LRU-cache miss rates for LABOR-0 sampling algorithm with $1024$
    batch size per GPU and varying $\kappa$ dependent minibatches,
    $\kappa=\infty$ denotes infinite dependency.}
\end{figure*}

We vary the batch dependency parameter $\kappa$ introduced in~\cref{subsecc:dependent_batches}
for the LABOR-0 sampler with a batch size of $1024$. Our
expectation is that as consecutive batches become more dependent on each other, the subgraphs
used during consecutive steps of training would start overlapping with each other, in which
case, the vertex embedding accesses would become more localized. We attempted to capture
this increase in temporal locality in vertex embedding accesses by implementing an LRU cache
to fetch them, the cache sizes used for different datasets is given in~\cref{tabc:dataset}.
Note that the cache miss rate is proportional to the amount of data that needs to be copied
from the vertex embedding storage. The~\cref{figc:same_loss_cache_miss} shows that as $\kappa$ increases,
the cache miss rate across all datasets
drops. On reddit, this is a drop from 64\% to 16\% on, a 4x improvement. We also observe that the
improvement is monotonically increasing as a function of $\frac{|E|}{|V|}$ given in~\cref{tabc:dataset}.
\cref{tabc:same_loss_cache,figc:same_loss_cache} show that training is not negatively affected across all datasets up to
$\kappa = 256$ with less than $0.1\%$ F1-score difference, after which point the validation F1-score with w/o
sampling starts to diverge from the $\kappa = 1$ case. Runtime benefits of this approach can be observed by comparing
the \textbf{Cache} and \textbf{Cache, $\bm\kappa$} columns in~\cref{tabc:cooperative_runtimes}.
\cref{subsecc:dependent_batches_cont} has additional
discussion about the effect of varying $\kappa$ and the last column of~\cref{tabc:dataset} shows the number
of minibatches in an epoch during training.


\subsection{Cooperative Minibatching}
\label{subsecc:cooperative_minibatching_experiment}

\begin{table*}[t!]
    \caption{Cooperative vs independent minibatching runtimes per minibatch (ms) on three different systems with 4 and 8
    NVIDIA A100 80 GB GPUs, and 16 NVIDIA V100 32GB GPUs. I/C denotes whether independent or cooperative minibatching is used. 
    Samp. is short for Graph Sampling, Feature Copy stands 
    for vertex embedding copies over PCI-e and Cache denotes the runtime of copies performed with a cache that can hold $10^6$ 
    vertex embeddings per A100 and $5 \times 10^5$ per V100. $\kappa$ denotes the use of batch dependency $\kappa = 64$.
    F/B means forward/backward. Total time is computed by the fastest available Feature Copy time, the sampling time, and the F/B 
    time. $|S^0|$ is the global batch size and $b$ is the the batch size per GPU. $\alpha$ stands for cross GPU communication 
    bandwidth (NVLink), $\beta$ for PCI-e bandwidth and $\gamma$ for GPU global memory bandwidth. {\color{OliveGreen}Green} was 
    used to indicate the better result between independent and cooperative minibatching, while \textbf{Bold} was used to
    highlight the feature copy time included in the \textbf{Total} column.}
    \vspace*{-2ex}
    \label{tabc:cooperative_runtimes}
    \begin{center}
    \begin{adjustbox}{width=\linewidth,center}
    \begin{tabular}{c | r | r | r | r | r | r | r | r | r}
    \toprule
    \multirowcell{2}{\textbf{\# GPUs, $\bm\gamma$} \\ \textbf{$\bm\alpha, \bm\beta, \bm{|S^0|}$}} & \multirowcell{2}{\textbf{Dataset} \\ \textbf{\& Model}}
    & \multirow{2}{*}{\textbf{Sampler}}
    & \multirow{2}{*}{\textbf{I/C}} & \multirow{2}{*}{\textbf{Samp.}} & \multicolumn{3}{c|}{\textbf{Feature Copy}} & \multirow{2}{*}{\textbf{F/B}} & \multirow{2}{*}{\textbf{Total}} \\
    & & & & & \textbf{-} & \textbf{Cache} & \textbf{Cache, $\bm\kappa$} & &  \\
    \midrule
    \multirowcell{8}{4 A100 \\ $\gamma =$ 2TB/s \\ $\alpha =$ 600GB/s \\ $\beta =$ 64GB/s \\ $|S^0| = 2^{12}$ \\ $b=1024$} & \multirowcell{4}{papers100M \\ GCN} & \multirowcell{2}{LABOR-0} & Indep & 21.7 & 18.4 & 16.8 & \textbf{11.2} & \color{OliveGreen}{8.9} & 41.8 \\
     & & & Coop & \color{OliveGreen}{17.7} & 14.0 & \color{OliveGreen}{10.1} & \color{OliveGreen}{\textbf{5.8}} & 13.0 & \color{OliveGreen}{\textbf{36.5}} \\ \cline{3-10}
     & & \multirowcell{2}{NS} & Indep & 16.1 & 26.5 & \textbf{22.1} & - & \color{OliveGreen}{10.1} & 48.3 \\
     & & & Coop & \color{OliveGreen}{11.9} & 21.3 & \color{OliveGreen}{\textbf{12.9}} & - & 15.0 & \color{OliveGreen}{\textbf{39.8}} \\ \cline{2-10}
     & \multirowcell{4}{mag240M \\ R-GCN} & \multirowcell{2}{LABOR-0} & Indep & 26.0 & 57.9 & 56.0 & \textbf{41.0} & 199.9 & 266.9 \\
     & & & Coop & \color{OliveGreen}{20.0} & 51.1 & \color{OliveGreen}{36.9} & \color{OliveGreen}{\textbf{23.4}} & \color{OliveGreen}{183.3} & \color{OliveGreen}{\textbf{226.7}} \\ \cline{3-10}
     &  & \multirowcell{2}{NS} & Indep & 14.4 & 78.0 & \textbf{71.2} & - & 223.0 & 308.6 \\
     & & & Coop & \color{OliveGreen}{12.3} & 73.9 & \color{OliveGreen}{\textbf{47.5}} & - & \color{OliveGreen}{215.6} & \color{OliveGreen}{\textbf{275.4}} \\
    \midrule
    \multirowcell{8}{8 A100 \\ $\gamma =$ 2TB/s \\ $\alpha =$ 600GB/s \\ $\beta =$ 64GB/s \\ $|S^0| = 2^{13}$ \\ $b=1024$} & \multirowcell{4}{papers100M \\ GCN} & \multirowcell{2}{LABOR-0} & Indep & 21.3 & 21.1 & 18.7 & \textbf{12.0} & \color{OliveGreen}{9.3} & 42.6 \\
     & & & Coop & \color{OliveGreen}{16.5} & 12.4 & \color{OliveGreen}{7.1} & \color{OliveGreen}{\textbf{4.0}} & 13.5 & \color{OliveGreen}{\textbf{34.0}} \\ \cline{3-10}
     & & \multirowcell{2}{NS} & Indep & 15.8 & 31.0 & \textbf{24.5} & - & \color{OliveGreen}{10.3} & 50.6 \\
     & & & Coop & \color{OliveGreen}{12.5} & 19.4 & \color{OliveGreen}{\textbf{9.0}} & - & 15.6 & \color{OliveGreen}{\textbf{37.1}} \\ \cline{2-10}
     & \multirowcell{4}{mag240M \\ R-GCN} & \multirowcell{2}{LABOR-0} & Indep & 30.6 & 70.1 & 66.2 & \textbf{46.8} & 202.1 & 279.5 \\
     & & & Coop & \color{OliveGreen}{21.6} & 50.6 & \color{OliveGreen}{29.0} & \color{OliveGreen}{\textbf{19.3}} & \color{OliveGreen}{172.2} & \color{OliveGreen}{\textbf{213.1}} \\ \cline{3-10}
     & & \multirowcell{2}{NS} & Indep & 15.0 & 94.9 & \textbf{80.9} & - & 224.8 & 320.7 \\
     & & & Coop & 14.9 & 71.6 & \color{OliveGreen}{\textbf{39.6}} & - & \color{OliveGreen}{209.0} & \color{OliveGreen}{\textbf{263.5}} \\
    \midrule
    \multirowcell{8}{16 V100 \\ $\gamma =$ 0.9TB/s \\ $\alpha =$ 300GB/s \\ $\beta =$ 32GB/s \\ $|S^0| = 2^{13}$ \\ $b=512$} & \multirowcell{4}{papers100M \\ GCN} & \multirowcell{2}{LABOR-0} & Indep & 39.1 & 44.5 & 40.2 & \textbf{29.4} & \color{OliveGreen}{15.1} & 83.6 \\
     & & & Coop & \color{OliveGreen}{26.9} & 22.7 & \color{OliveGreen}{10.4} & \color{OliveGreen}{\textbf{4.9}} & 19.1 & \color{OliveGreen}{\textbf{50.9}} \\ \cline{3-10}
     & & \multirowcell{2}{NS} & Indep & \color{OliveGreen}{18.0} & 61.3 & \textbf{52.0} & - & \color{OliveGreen}{16.2} & 86.2 \\
     & & & Coop & 19.2 & 34.9 & \color{OliveGreen}{\textbf{13.0}} & - & 21.3 & \color{OliveGreen}{\textbf{53.5}} \\ \cline{2-10}
     & \multirowcell{4}{mag240M \\ R-GCN} & \multirowcell{2}{LABOR-0} & Indep & 50.8 & 128.8 & 121.3 & \textbf{96.2} & 156.1 & 303.1 \\
     & & & Coop & \color{OliveGreen}{29.2} & 78.1 & \color{OliveGreen}{42.8} & \color{OliveGreen}{\textbf{23.5}} & \color{OliveGreen}{133.3} & \color{OliveGreen}{\textbf{186.0}} \\ \cline{3-10}
     & & \multirowcell{2}{NS} & Indep & 19.3 & 167.3 & \textbf{152.6} & - & 170.9 & 342.8 \\
     & & & Coop & 19.3 & 116.1 & \color{OliveGreen}{\textbf{53.1}} & - & \color{OliveGreen}{160.4} & \color{OliveGreen}{\textbf{232.8}} \\
    \bottomrule
    \end{tabular}
    \end{adjustbox}
    \end{center}
    \vspace*{-2ex}
\end{table*}

We use our largest datasets, mag240M and papers100M, as distributed training
is motivated by large-scale datasets.
We present our runtime results on systems equipped with NVIDIA GPUs, with 4 and 8 A100 80 GB~\citep{a100} and
16 V100 32GB~\citep{v100}, all with NVLink
interconnect between the GPUs (600 GB/s for A100 and 300 GB/s for V100). The GPUs perform all stages of GNN
training and the CPUs are only used to launch kernels for the GPUs. Feature copies are performed by GPUs as well,
accessing pinned feature tensors over the PCI-e using zero-copy access.
In cooperative minibatching, both data size and computational cost are shrinking with increasing numbers of GPUs,
relative to independent minibatching. We
use the GCN model for papers100M and the R-GCN model~\citep{rgcn} for mag240M.
As seen in~\cref{tabc:cooperative_runtimes},
cooperative minibatching reduces all the runtimes
for different stages of GNN training, except for the F/B (forward/backward) times on papers100M
where the computational cost is not high enough to hide the overhead of communication.

\begin{table}[ht!]
    \centering
    \begin{small}
    \caption{Runtime improvements of Cooperative Minibatching over Independent Minibatching compiled
    from the \textbf{Total} column of~\cref{tabc:cooperative_runtimes}. This is a further improvement on top of the speedup
    of independent minibatching already achieves using the same number GPUs.}
    \label{tabc:cooperative_speedup}
    \begin{tabular}{c | c | r | r | r}
    \toprule
    \textbf{Dataset \& Model}
    & \textbf{Sampler}
     & \textbf{4 GPUs} & \textbf{8 GPUs} & \textbf{16 GPUs} \\
    \midrule
    \multirowcell{2}{papers100M \& GCN} & LABOR-0 & $15\%$ & $25\%$ & $64\%$ \\
     & NS & $21\%$ & $36\%$ & $61\%$ \\ \hline
    \multirowcell{2}{mag240M \& R-GCN} & LABOR-0 & $18\%$ & $31\%$ & $63\%$ \\
     & NS & $12\%$ & $22\%$ & $47\%$ \\
    \bottomrule
    \end{tabular}
    \end{small}
\end{table}

If we take the numbers in the \textbf{Total} columns from~\cref{tabc:cooperative_runtimes}, divide independent
runtimes by the corresponding cooperative ones, then we get~\cref{tabc:cooperative_speedup}. We can see that
the theoretical decrease in work results in increasing speedup numbers with the increasing number of PEs, due 
to~\cref{th:work_monotonicity}.
We would like to point out that $\frac{E[|S^3|]}{|S^0|}$ curves in~\cref{figc:num_input_nodes} are responsible
for these results. With $P$ PEs and $|S^0|$ global batch size, the work performed by independent minibatching vs
cooperative minibatching can be compared by looking at $x=\frac{1}{P}|S^0|$ vs $x=|S^0|$ respectively.

We also ran experiments that show that graph partitioning using METIS~\citep{metis98} prior to the start of training
can help the scenarios where communication overhead is significant. The forward-backward time
decreases from 13.0ms to 12.0ms on papers100M with LABOR-0 on 4 NVIDIA A100 GPUs with partitioning due to 
reduced communication overhead using the same setup as~\cref{tabc:cooperative_runtimes}.

Increasing the number of GPUs increases the advantage of cooperative minibatching compared to independent minibatching. 
The forward-backward time on mag240M with LABOR-0 is 200 (same as independent baseline), 194, 187 and 183 ms with 1, 2, 3 and 4 
cooperating GPUs, respectively measured on the NVIDIA DGX Station A100 machine with R-GCN. When using a different GNN model such as GATs~\citep{velickovic2018graph}, the forward-backward runtime is 190 ms on 1 GPU vs 172 ms on 4 GPUs on mag240M.
The decrease in runtime with increasingly cooperating GPUs is due to the decrease
in redundant work they have to perform. Even though the batch size per GPU is constant, the runtime
goes down similar to the plots in the top row of ~\cref{figc:num_input_nodes}, except that it follows,
$\frac{kE[|S^2|]}{|S^0|}$, the average number of edges in the 3rd layer when a sampler with
fanout $k$ is used.

Additionally, we demonstrate that there is no significant model convergence difference
between independent vs cooperative minibatching in~\cref{subsecc:coop_indep_conv}.

\subsubsection{Cooperative-Dependent Minibatching}
\label{subsecc:cooperative_dependent_batches}

\begin{table}[h!]
    \centering
    \begin{small}
    \caption{Runtime improvements of Dependent Minibatching, for Independent and Cooperative Minibatching methods, compiled
    from the \textbf{Cache, $\bm\kappa$} and \textbf{Cache} columns of~\cref{tabc:cooperative_runtimes} with LABOR-0. Making consecutive minibatches
    dependent increases temporal locality, hence reducing cache misses.
    }
    \label{tabc:cooperative_dependent_speedup}
    \begin{tabular}{c | c | r | r | r}
    \toprule
    \textbf{Dataset \& Model}
    & \textbf{I/C}
     & \textbf{4 GPUs} & \textbf{8 GPUs} & \textbf{16 GPUs} \\
    \midrule
    \multirowcell{2}{papers100M \& GCN} & Indep + Depend & $50\%$ & $57\%$ & $37\%$ \\
                                        & Coop  + Depend & $74\%$ & $78\%$ & $112\%$ \\ \hline 
    \multirowcell{2}{mag240M \& R-GCN} & Indep + Depend & $37\%$ & $41\%$ & $26\%$ \\
                                       & Coop + Depend & $58\%$ & $50\%$ & $82\%$ \\
    \bottomrule
    \end{tabular}
    \end{small}
\end{table}

We use the same experimental setup as~\cref{subsecc:cooperative_minibatching_experiment} but
vary the $\kappa$ parameter to show that cooperative minibatching can be used with
dependent batches (\cref{figc:coop_dependent_cache_miss}). We use a cache size of 1M vertex embeddings per GPU.
Cooperative feature loading effectively increases the global cache size since each PE (GPU) caches
only the vertices assigned to them while independent feature loading can have duplicate entries
across caches. For our largest dataset mag240M,
on top of $1.4\times$ reduced work due to cooperative minibatching alone,
the cache miss rates were reduced by more than $2\times$, making the total improvement $3\times$.
Runtime results for $\kappa \in \{1, 256\}$ are presented in~\cref{tabc:cooperative_runtimes},
the Feature Copy \textbf{Cache} and \textbf{Cache, $\bm\kappa$} columns.~\cref{tabc:cooperative_dependent_speedup}
summarizes these results by dividing the runtimes in \textbf{Cache} by \textbf{Cache, $\bm\kappa$} and reporting
percentage improvements.

\section{Key insights of Cooperative Minibatching}

In~\cref{secc:background}, the work $W(|S^0|)$ to process an epoch (full pass over the 
dataset) for a given minibatch size $|S^0|$ is characterized by the number of minibatches 
in an epoch ($\frac{|V|}{|S^0|}$) $\times$ the amount of work to process a single 
minibatch, which is approximated by the sum of the number of sampled vertices in 
each layer ($\sum_{l=1}^L |S^l|$). This can be seen in~\cref{eqc:work_definition}.

\cref{eqc:work_definition} only considers the number of processed vertices and it is good 
enough for our purposes. Since all the sampling algorithms we consider 
in~\cref{subsecc:graph_sampling} have fanout parameters $k$, the number of edges sampled 
for each seed vertex has an upper bound $k$. So, given vertices $S^l$ for the $l$th layer, 
the number of sampled edges in that same layer will be $\leq k |S^l|$. Clearly for almost
any GNN model, the runtime complexity to process layer $l$ is linearly increasing w.r.t.
the number of vertices ($|S^l|$) and edges ($\leq k|S^l|$) processed, so the runtime 
complexity is $\mathcal{O}(|S^l| + k|S^l|) = \mathcal{O}(|S^l|)$.

A more comprehensive analysis of the runtime complexities of Cooperative and 
Independent Minibatching approaches is provided in~\cref{subsecc:complexity_analysis}, 
taking into account the exact numbers of sampled vertices ($|S^l|$), edges ($|E^l|$), 
and various communication bandwidths ($\alpha$, $\gamma$, $\beta$) and even graph 
partition quality $c$ and cache misses $\rho$.

As Cooperative Minibatching considers a single minibatch of size $B$ for all $P$ PEs, the 
growth of the number of sampled vertices and edges is characterized by $B$ as the 
minibatch size. In contrast, Independent Minibatching assigns different minibatches of 
sizes $\frac{B}{P}$ to each PE, so the growth of the sampled vertices and edges is 
characterized by $\frac{B}{P}$ as the minibatch size. Due 
to~\cref{th:work_monotonicity,th:overlap_monotonicity}, the work $W(B)$ with respect to a 
minibatch size $B$ is a concave function, we have $W(B) \leq PW(\frac{B}{P})$, meaning 
that Cooperative Minibatching is theoretically faster than Independent Minibatching, if 
communication between PEs is infinitely fast. For empirical data, one can look at the 
first-row of~\cref{figc:num_input_nodes} at the $x$-axis $B$ for Cooperative and $\frac{B}{P}$ for Independent, due to:
\[
W(B) \leq PW(\frac{B}{P}) \iff \frac{W(B)}{B} \leq \frac{W(\frac{B}{P})}{\frac{B}{P}}
\]
as proven by~\cref{th:work_monotonicity}. As $\frac{E[|S^3|]}{|S^0|}$ curves
in~\cref{figc:num_input_nodes} are decreasing, the work of Cooperative Minibatching is 
significantly less than Independent Minibatching. Finally, modern multi-GPU computer 
systems have very fast inter-GPU communication bandwidths, e.g. NVLink, which is why we 
are able to show favorable results compared to Independent Minibatching despite performing 
seemingly unnecessary communication.

\section{Conclusions}

In this paper, we investigated the difference between DNN and GNN minibatch training.
We observed that the cost of processing a minibatch is a concave function of
batch size in GNNs, unlike DNNs where the cost scales linearly.
We then presented theorems that this is indeed the case for every graph and
then proceeded to propose two approaches to take advantage of cost concavity.
The first approach, which we call cooperative minibatching
proposes to partition a minibatch between multiple PEs and process it cooperatively. This is
in contrast to existing practice of having independent minibatches per PE, and avoids duplicate
work that is a result of vertex and edge repetition across PEs. The second approach proposes
the use of consecutive dependent minibatches, through which the temporal locality of vertex
and edge accesses is manipulated. As batches get more dependent, the locality increases.
We demonstrated this increase in locality by employing an LRU-cache for vertex embeddings on GPUs.
Finally, we showed that these approaches can be combined without affecting convergence,
and speed up multi-GPU GNN training by up to $64\%$.





\bibliography{main}
\bibliographystyle{tmlr}

\newpage
\appendix
\section{Appendix}

\subsection{Graph Sampling}
\label{subsecc:graph_sampling_background}

Below, we review three different sampling algorithms for minibatch training of
GNNs.
Our focus in this work is samplers whose expected number of sampled vertices is a
function of the batch size. All these methods
are applied recursively for GNN models with multiple layers.

\subsubsection{Neighbor Sampling (NS)}

Given a fanout parameter $k$ and a batch of seed vertices $S^0$,
NS by~\citep{neighborsampling} samples the neighborhoods of vertices
randomly.
Given a batch of vertices $S^0$, a vertex $s \in S^0$ with degree $d_s = |N(s)|$, if $d_s \leq k$, NS uses the full neighborhood $N(s)$, otherwise
it samples $k$ random neighbors for the vertex $s$.


\subsubsection{LABOR Sampling}

Given a fanout parameter $k$ and a batch of seed vertices $S^0$,
LABOR-0~\citep{Balin23-NeurIPS} samples the neighborhoods of vertices as
follows.
First, each vertex rolls a uniform random number $0 \leq r_{t} \leq 1$.
Given batch of vertices $S^0$, a vertex $s \in S^0$ with degree $d_s = |N(s)|$, the edge $(t \to s)$ is sampled
if $r_t \leq \frac{k}{d_s}$.
Since different seed vertices $\in S^0$ end up using the same random variate
$r_t$ for the same source vertex $t$, LABOR-0 samples fewer vertices than NS in expectation.

The LABOR-* algorithm is the importance sampling variant of LABOR-0 and samples an edge $(t \to s)$
if $r_t \leq c_s \pi_t$, where $\pi$ is importance sampling probabilities optimized to minimize
the expected number of sampled vertices and $c_s$ is a normalization factor. LABOR-*
samples fewer vertices than LABOR-0 in expectation.

Note that, choosing $k \geq \max_{s \in V} d_s$ corresponds to training with full
neighborhoods for both NS and LABOR methods.

\subsubsection{RandomWalk (RW) Sampling}

Given a walk length $o$, a restart probability $p$, number of random walks $a$, a fanout $k$, and
a batch of vertices $S^0$, a vertex $s \in S^0$, a {\em Random Walk}~\citep{pinsage} starts
from $s$ and each step picks a random neighbor $s'$ from $N(s)$. For the remaining
$o - 1$ steps, the next neighbor is picked from $N(s')$ with probability $1 - p$, otherwise
it is picked from $N(s)$. This process is repeated $a$ times for each seed vertex and lastly,
the top $k$ visited vertices become the {\em neighbors} of $s$ for the current layer.

Notice that random walks correspond to weighted neighbor sampling from a graph with adjacency matrix $\tilde{A} = \sum_{i = 1}^o A^i$,
where the weights of $\tilde{A}$ depend on the parameters $a$, $p$ and
$k$. Random walks give us the ability to sample from $\tilde{A}$ without actually
forming $\tilde{A}$.

\subsection{Work Monotonicity Theorem}
\label{subsecc:work_monotonicity}

\begin{theorem}
The work per epoch required to train a GNN model using minibatch training is
monotonically nonincreasing as the batch size increases.
\label{tha:work_monotonicity}
\end{theorem}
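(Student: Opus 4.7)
The plan is to reduce the claim to a submodularity-and-concavity argument for an averaged set function. For any realization of the (possibly random) sampling map $s \mapsto \{s\}^l$, define the set function $g(B) = \bigl|\bigcup_{s \in B}\{s\}^l\bigr|$ on subsets $B \subseteq V_t$, so that $|S^l| = g(S^0)$. First I would verify that $g$ is submodular: for $B \subseteq B' \subseteq V_t$ and $v \notin B'$, since $\bigcup_{s \in B}\{s\}^l \subseteq \bigcup_{s \in B'}\{s\}^l$, the elements of $\{v\}^l$ lying outside the smaller union form a superset of those outside the larger one, giving $g(B \cup \{v\}) - g(B) \geq g(B' \cup \{v\}) - g(B')$.

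Next, with $f(b) = E[g(B_b)]$ for $B_b$ a uniformly random size-$b$ subset of $V_t$, I would prove $f$ is concave on $\{0, 1, \ldots, |V_t|\}$ via a coupling argument. Realize $B_b$ through a single random permutation $X_1, \ldots, X_n$ of $V_t$ by $B_b = \{X_1, \ldots, X_b\}$. Then $f(b+1) - f(b) = E[g(B_b \cup \{X_{b+1}\}) - g(B_b)]$. Submodularity gives $g(B_b \cup \{X_{b+1}\}) - g(B_b) \leq g(B_{b-1} \cup \{X_{b+1}\}) - g(B_{b-1})$ pointwise on the permutation; by exchangeability of $X_b$ and $X_{b+1}$ conditional on $\{X_1,\ldots,X_{b-1}\}$, the expectation of the right-hand side equals $f(b) - f(b-1)$, establishing $f(b+1) - f(b) \leq f(b) - f(b-1)$.

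Since $f(0) = g(\emptyset) = 0$, concavity together with this boundary value yields monotonicity of the ratio: for $0 < b_1 < b_2$, writing $b_1 = (b_1/b_2)\,b_2 + (1 - b_1/b_2)\cdot 0$, concavity gives $f(b_1) \geq (b_1/b_2)\, f(b_2)$, hence $f(b_1)/b_1 \geq f(b_2)/b_2$. Taking an outer expectation over the randomness of the sampling map preserves $f(0)=0$, concavity, and monotonicity of $f(b)/b$, giving the claim for the unconditional ratio $E[|S^l|]/|S^0|$.

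The main obstacle is the submodularity-to-concavity step. The argument is classical but delicate: both adjacent differences $f(b+1) - f(b)$ and $f(b) - f(b-1)$ must be expressed through the \emph{same} random permutation so that submodularity can be applied pointwise, and then exchangeability of $X_b$ and $X_{b+1}$ given $\{X_1,\ldots,X_{b-1}\}$ must be used carefully to recover the inequality between adjacent first differences of $f$.
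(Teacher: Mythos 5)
Your proof is correct, but it takes a genuinely different route from the paper's. You prove the stronger statement first --- that $f(b)=E[|S^l|]$ is concave in the batch size $b$ --- by observing that for any fixed realization of the sampler's randomness the map $B \mapsto \bigl|\bigcup_{s\in B}\{s\}^l\bigr|$ is a coverage function, hence submodular, and then transferring submodularity to concavity of the expectation over uniform random $b$-subsets via the permutation coupling; ratio monotonicity then falls out of concavity together with $f(0)=0$ (equivalently, $f(b)/b$ is the running average of the nonincreasing first differences of $f$). The paper instead proves the ratio inequality $\frac{E[|S^l|]}{|S^0|}\le\frac{E[|S'^l|]}{|S^0|-1}$ directly, by summing the marginal losses $|S^l|-|(S^0\setminus\{s\})^l|$ over all $s\in S^0$ and observing that this sum counts exactly the vertices $w\in S^l$ covered by the expansion of a unique seed, hence is at most $|S^l|$; concavity is then a separate theorem with its own argument built on the sets $T_l$ and $T_2^l$. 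Your route is more economical in that a single submodularity argument yields both the work-monotonicity and the concavity theorems at once (the paper itself remarks that concavity implies work monotonicity), and it connects the result to standard machinery for coverage functions; the paper's direct count buys an explicit expression for the deficit, $|T_l(S^0)|$, which it then reuses in its concavity proof. Both arguments rest on the same implicit assumption --- that for a fixed realization of the randomness the expansion $\{s\}^l$ of a seed does not depend on which other seeds are in the batch, so that $S^l=\bigcup_{s\in S^0}\{s\}^l$ --- which you make explicit and which holds for the samplers considered.
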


\begin{proof}
Given any $n \geq 2$, let's say we uniform randomly sample without replacement
$S^0 \subset V$, where $n=|S^0|$.
Now note that for any $S'^0 \subset S^0$, using the definition in (\ref{eq:s_l}), we have
$S'^l \subset S^l, \forall l$. We will take advantage of that and define
$S'^0 = S^0 \setminus \{s\}$ in following expression.

\begin{equation}
\begin{split}
    \sum_{\substack{s \in S^0 \\ S'^0 = S^0 \setminus \{s\}}} |S^l| - |S'^l| = \sum_{\substack{s \in S^0 \\ S'^0 = S^0 \setminus \{s\}}} \sum_{w \in S^l} \mathbbm{1}[w \notin S'^l] \\
    = \sum_{\substack{s \in S^0 \\ S'^0 = S^0 \setminus \{s\}}} \sum_{w \in \{s\}^l} \mathbbm{1}[w \notin S'^l] \\
    = \sum_{\substack{s \in S^0 \\ S'^0 = S^0 \setminus \{s\}}} \sum_{w \in \{s\}^l} \mathbbm{1}[w \notin \{s'\}^l, \forall s' \in S'^0] \\
    = \sum_{w \in S^l} \sum_{\substack{s \in S^0 \\ w \in \{s\}^l}} \mathbbm{1}[w \notin \{s'\}^l, \forall s' \in S^0 \setminus \{s\}]
    \label{eq:subset_argument}
\end{split}
\end{equation}

In the last expression, for a given $w \in S^l$, if there are two different elements $s, s' \in S^0$ such that
$w \in \{s\}^l$ and $w \in \{s'\}^l$, then the indicator expression will be $0$. It will be $1$ only if
$w \in \{s\}^l$ for a unique $s \in S^0$. So:

\begin{equation}
\begin{split}
    \sum_{w \in S^l} \sum_{\substack{s \in S^0 \\ w \in \{s\}^l}} \mathbbm{1}[w \notin \{s'\}^l, \forall s' \in S^0 \setminus \{s\}] = \sum_{\substack{w \in S^l \\ \exists! s \in S^0, w \in \{s\}^l}} 1 \\
    = |\{w \in S^l \mid w \in \{s\}^l, \exists! s \in S^0\}| \leq |S^l|
    \label{eq:subset_argument2}
\end{split}
\end{equation}

Using this, we can get:

\begin{gather*}
    \sum_{\substack{S'^0 \subset S^0 \\ |S'^0| + 1 = |S^0|}} |S^l| - |S'^l| \leq |S^l| \\
    \iff |S^0||S^l| - \sum_{\substack{S'^0 \subset S^0 \\ |S'^0| + 1 = |S^0|}} |S'^l| \leq |S^l| \\
    \iff |S^l|(|S^0| - 1) \leq \sum_{\substack{S'^0 \subset S^0 \\ |S'^0| + 1 = |S^0|}} |S'^l| \\
    \iff |S^l|(|S^0| - 1) \leq |S^0| E[|S'^l|] \\
    \iff \frac{|S^l|}{|S^0|} \leq \frac{E[S'^l]}{|S'^0|}
\end{gather*}

Since $S^0$ was uniformly randomly sampled
from $V$, its potential subsets $S'^0$ are also uniformly randomly picked from $V$ as a result. Then,
taking an expectation for the random sampling of $S^0 \subset V$, we conclude that
$\frac{E[|S^l|]}{|S^0|} \leq \frac{E[|S'^l|]}{|S'^0|}$, i.e., the expected work of batch size $n$ is
not greater than the work of batch of size $n-1$. This implies that the work with respect to batch size is a
monotonically nonincreasing function.
\end{proof}

\subsection{Overlap monotonicity}
\label{subsecc:overlap_monotonicity}

\begin{theorem}
The expected subgraph size $E[|S^l|]$ required to train a GNN model using minibatch training is a
concave function of batch size, $|S^0|$.
\label{tha:overlap_monotonicity}
\end{theorem}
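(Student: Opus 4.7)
The plan is to exploit that the map $S^0 \mapsto |S^l|$ is a submodular set function, and then apply a standard exchange argument to conclude that $g(n) := E_{|S^0|=n}[|S^l|]$ is concave in the integer batch size $n$. Concavity in the discrete sense means the forward differences $g(n+1) - g(n)$ are nonincreasing, equivalently $g(n) + g(n+2) \leq 2\, g(n+1)$ for all $n \geq 0$.

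First I would establish submodularity. Writing $S^l = \bigcup_{s \in S^0} \{s\}^l$, the function $h(S^0) := |S^l|$ is a coverage function, hence submodular and monotone. Concretely, for any $A \subset B \subset V$ and $s \notin B$, because $A^l \subset B^l$ by (\ref{eq:s_l}),
\begin{equation*}
h(A \cup \{s\}) - h(A) \;=\; |\{s\}^l \setminus A^l| \;\geq\; |\{s\}^l \setminus B^l| \;=\; h(B \cup \{s\}) - h(B),
\end{equation*}
which is the diminishing-returns definition of submodularity.

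Next, I would deduce concavity by an exchange argument. Draw $A \subset V$ uniformly of size $n$ and then two distinct vertices $s_1, s_2$ uniformly from $V \setminus A$. By symmetry of this joint construction, $A$ is uniform of size $n$, each of $A \cup \{s_1\}$ and $A \cup \{s_2\}$ is uniform of size $n+1$, and $A \cup \{s_1, s_2\}$ is uniform of size $n+2$. Applying submodularity to the nested pair $A \subset A \cup \{s_1\}$ with the extra element $s_2$ gives pointwise
\begin{equation*}
h(A \cup \{s_1, s_2\}) - h(A \cup \{s_1\}) \;\leq\; h(A \cup \{s_2\}) - h(A),
\end{equation*}
and taking expectations collapses the four terms to $g(n+2) - g(n+1) \leq g(n+1) - g(n)$, which is exactly discrete concavity. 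This also dovetails with Theorem~\ref{tha:work_monotonicity}: that proof already exposes $|T_l(S^0)|$ as the \emph{first} discrete difference; the authors' introduction of $T_2^l(S^0)$ (vertices in $S^l$ reached by exactly two seeds) is what controls the \emph{second} discrete difference, quantifying by how much the marginal coverage shrinks.

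The main obstacle is handling random samplers rather than deterministic $l$-hop expansion. For NS and RW, each seed independently samples its own neighborhood, so conditional on the random draws the expansion $\widetilde{\{s\}^l}$ is a deterministic set attached to $s$ only, and $|\widetilde{S^l}|$ remains a coverage function of $S^0$ pointwise, hence submodular in expectation. For LABOR, seeds share the per-vertex random variates $r_t$, but these variates are drawn independently of $S^0$; conditional on $r$, the set $\widetilde{\{s\}^l}$ is again a fixed subset of $V$ depending only on $s$ (and $r$), so $|\widetilde{S^l}|$ is a coverage function pointwise in $r$ and the exchange argument applies after taking the outer expectation over $r$. Thus the submodularity route covers all three sampling algorithms uniformly, and the concavity conclusion transfers without change.
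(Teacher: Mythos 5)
Your proof is correct, and it reaches the paper's conclusion by a genuinely different route. The paper's argument is a bespoke counting one: it introduces $T_l(S^0)$ (vertices of $S^l$ covered by exactly one seed) and $T_2^l(S^0)$ (covered by exactly two), derives the identity $E[|S^l|] = E[|S'^l|] + E[|T_l(S^0)|]/|S^0|$ for $|S'^0|=|S^0|-1$, telescopes it to $E[|S^l|]=\sum_{i=1}^{|S^0|} E[|T_l(V_0^i)|]/i$, and then uses a second identity involving $T_2^l$ to show these increments are nonincreasing. You instead package the same structural fact as submodularity of the coverage function $S^0 \mapsto |\bigcup_{s\in S^0}\{s\}^l|$ and obtain nonincreasing forward differences directly from the diminishing-returns inequality via a coupling ($A$ uniform of size $n$ plus two uniform extra vertices $s_1,s_2$); the only point needing verification is that each of the four marginal sets is uniform of the stated cardinality, and your check is correct. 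Your route buys brevity and transparency (concavity becomes an instance of a standard fact about monotone submodular functions under uniform sampling) and makes explicit the first-versus-second discrete difference relation to \cref{tha:work_monotonicity} that the paper leaves implicit; the paper's route buys the quantitative decomposition $E[|S^l|]=\sum_i E[|T_l(V_0^i)|]/i$, which ties the increments to a concrete combinatorial quantity used to interpret the empirical curves. One small imprecision on your side: for NS the right justification is not that ``each seed independently samples its own neighborhood'' (a frontier vertex reached from two seeds is sampled once), but the one you already give for LABOR --- the variates $r_{ts}$ (resp.\ $r_t$) are attached to edges (resp.\ vertices) independently of the seed set, so conditional on them $\widetilde{\{s\}^l}$ is deterministic and the coverage structure survives. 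Since the paper's own proof treats only the deterministic $l$-hop expansion and defers the sampled case to empirical evidence, your conditioning argument is in fact a modest strengthening.
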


\begin{proof}

Given any $n \geq 2$, let's say we uniformly randomly sample without replacement $S^0 \subset V$ of size $n$. Note that we use the $T^l$ and $T_2^l$ as defined in~\cref{eq:T_1_S,eq:T_2_S}.

\begin{equation}
\begin{split}
    |T^l(S^0)| - 2|T_2^l(S^0)| = \sum_{\substack{S'^0 \subset S^0 \\ |S'^0| + 1 = |S^0|}} |T^l(S^0)| - |T^l(S'^0)| \\
    = |S^0| |T^l(S^0)| - |S^0| E[|T^l(S'^0)|] \\
    \iff (|S^0| - 1) |T^l(S^0)| = |S^0| E[|T^l(S'^0)|] - 2|T_2^l(S^0)| \\
    \iff \frac{|T^l(S^0)|}{|S^0|} = \frac{E[|T^l(S'^0)|]}{|S'^0|} - \frac{2|T_2^l(S^0)|}{|S'^0||S^0|} \\
    \implies \frac{|T^l(S^0)|}{|S^0|} \leq \frac{E[|T^l(S'^0)|]}{|S'^0|}
\label{eq:overlap_monotonicity}
\end{split}
\end{equation}

where the first equality above is derived similar to~\cref{eq:subset_argument,eq:subset_argument2}. Overall, this means that the overlap between vertices increases as the batch size increases.
Utilizing our finding from~\cref{eq:subset_argument,eq:subset_argument2}, we have:

\begin{equation}
\begin{split}
    \sum_{\substack{S'^0 \subset S^0 \\ |S'^0| + 1 = |S^0|}} |S^l| - |S'^l| = |T^l(S^0)| \\
    \implies |S^0||S^l| - |S^0|E[|S'^l|] = |T^l(S^0)| \\
    \implies |S^l| = E[|S'^l|] + \frac{|T^l(S^0)|}{|S^0|} \\
    \implies E[|S^l|] = E[|S'^l|] + \frac{E[|T^l(S^0)|]}{|S^0|}
\end{split}
\end{equation}

Note that the last step involved taking expectations for the random sampling of $S^0$. See the recursion embedded in the equation above, the expected size of the subgraph $S^l$ with
batch size $|S^0|$ depends on the expected size of the subgraph $S'^l$ with batch size $|S^0|-1$. Expanding
the recursion, we get:

\begin{equation}
    E[|S^l|] = \sum_{i=1}^{|S^0|} \frac{E[|T^l(S^0_i)|]}{i}
\end{equation}

where $S^0_i$ is a random subset of $S^0$ of size $i$. Since $\frac{E[|T^l(S^0_i)|]}{i}$ is
monotonically nonincreasing as $i$ increases as we showed in~(\ref{eq:overlap_monotonicity}), we conclude that 
$E[|S^l|]$ is a concave function of the batch size, $|S^0|$.
\end{proof}

\begin{figure*}[ht]
    \centering
    \includegraphics[width=\linewidth]{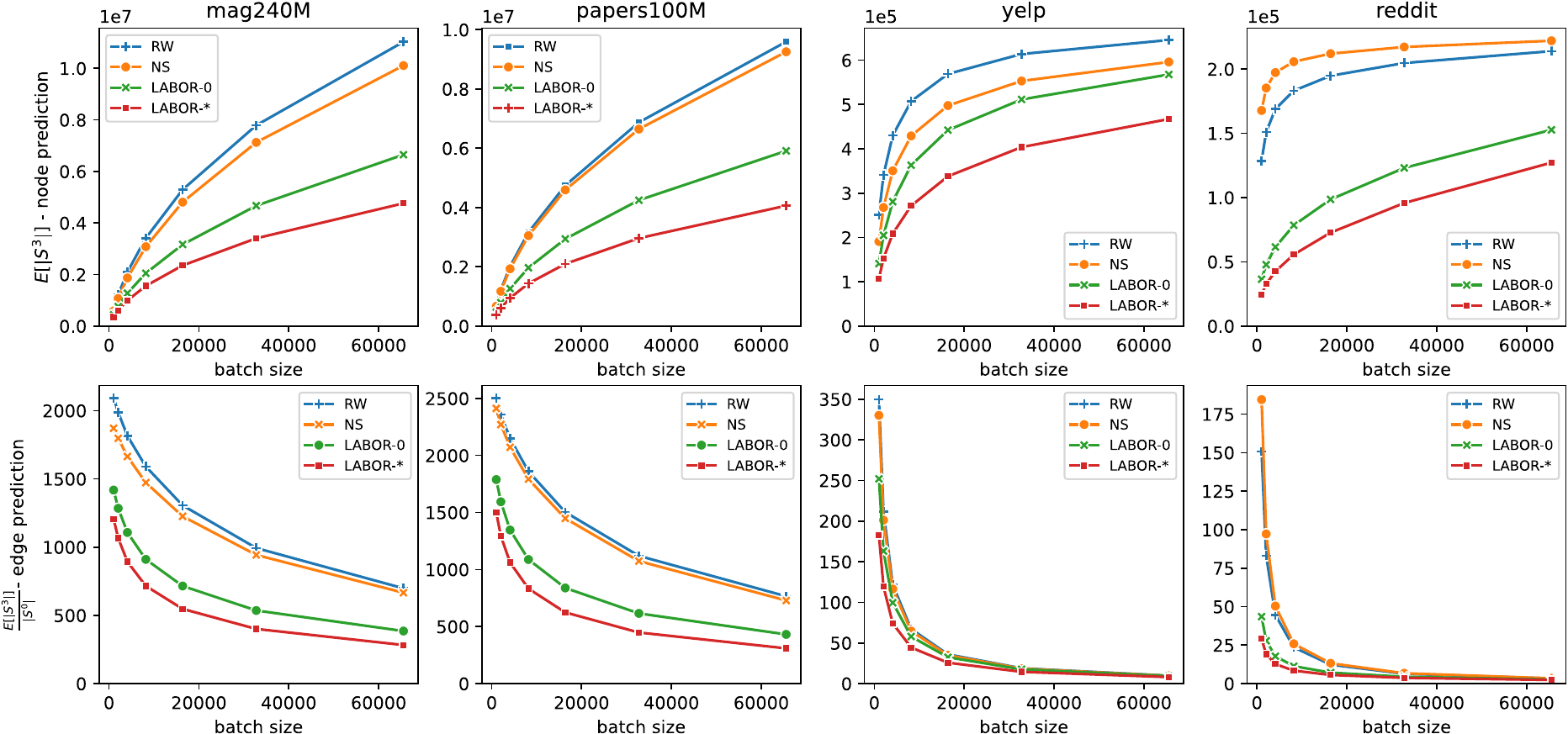}
    \caption{Monotonicity of the work. x axis shows the batch size, y axis shows
    $E[|S^3|]$ for node prediction (top row)
    and $\frac{E[|S^3|]}{|S^0|}$ for edge prediction (bottom row), where $E[|S^3|]$ denotes
    the expected number of vertices sampled in the 3rd layer and $|S^0|$ denotes
    the batch size.
    RW stands for Random Walks, NS stands for Neighbor Sampling, and LABOR-0/*
    stand for the two different variants of
    the LABOR sampling algorithm described in~\cref{subsecc:graph_sampling}. Completes~\cref{figc:num_input_nodes}.}
    \label{figc:num_input_nodes_2}
\end{figure*}

\subsection{Subgraph density monotonicity}
\label{subsecc:subgraph_density_monotonicity}

\begin{theorem}
The expected sampled subgraph density $\frac{E[|S_E|]}{|S|}$ is nondecreasing as the batch size $|S|$ increases.
\label{tha:subgraph_density_monotonicity}
\end{theorem}

\begin{proof}
Given $n \geq 1$ and any probability distribution $p$ on $V$ such that $p_s$
denotes the probability of including $s$ in our minibatch $S$ of size $|S|=n$, 
the probability of including the edge $t \to s$ becomes $p_t p_s$. Since
$p_s(n)$ is linear as a function of $n$, $\forall s \in V$, it can be decomposed as
$p_s(n) = p'_s n$.

\begin{equation}
    \frac{E[|S_E|]}{|S|} = \frac{1}{|S|}\sum_{(t \to s) \in E} \mathbb{P}(t \in S \land s \in S) = \frac{1}{|S|}\sum_{(t \to s) \in E} p_t p_s = \frac{1}{n}\sum_{(t \to s) \in E} p'_t n p'_s n = n \sum_{(t \to s) \in E} p'_t p'_s
\end{equation}

We can see that the expected sampled subgraph density is a linear function of the 
batch size $|S|=n$ and this implies that it is nondecreasing as the batch size $|S|$ 
increases.
\end{proof}

\subsection{Experimental Setup}
\label{subsecc:experimental_setup}

\textbf{Setup:} In our experiments, we use the following datasets:
reddit~\citep{neighborsampling}, papers100M~\citep{ogb2020}, mag240M~\citep{ogblsc}, yelp and
flickr~\citep{graphsaint-iclr20}, and their details are given in~\cref{tabc:dataset}. We use
Neighbor Sampling (NS)~\citep{neighborsampling}, LABOR
Sampling~\citep{Balin23-NeurIPS} and Random Walks (RW)~\citep{pinsage} to form minibatches.
We used a fanout of $k=10$ for the samplers. In addition, Random Walks
used length of $o=3$, restart probability $p = 0.5$ and
number of random walks from each seed $a=100$.
All our experiments involve a GCN model with $L=3$
layers~\citep{neighborsampling}, with 1024 hidden dimension for mag240M and papers100M
and 256 for the rest. Additionally, papers100M and mag240M datasets were made undirected graphs
for all experiments and this is reflected in the reported edge counts in~\cref{tabc:dataset}. Input
features of mag240M are stored with the 16-bit floating point type.
We use the Adam optimizer~\citep{Kingma2014} with $10^{-3}$ learning rate in all the experiments.

\textbf{Implementation:}
We implemented
\footnote{Source code is available at \url{https://github.com/GT-TDAlab/dgl-coop/tree/dist_graph_squashed_wip_cache}}
our experimental code using C++ and Python in the DGL framework~\citep{wang2019deep}
with the Pytorch backend~\citep{Paszke2019PyTorchAI}.
All our experiments were repeated 50 times
and averages are presented. Early stopping
was used during model training runs.
So as we go to the right along the
x-axis, the variance of our convergence plots increases because the number of runs that were ongoing is decreasing.

\subsection{Complexity Analysis (cont.)}
\label{subsecc:complexity_analysis}

\begin{figure}[ht]
    \centering
\begin{subfigure}{.49\linewidth}
    \includegraphics[width=\linewidth,page=4,trim=0.9cm 0.4cm 2.0cm 1.2cm]{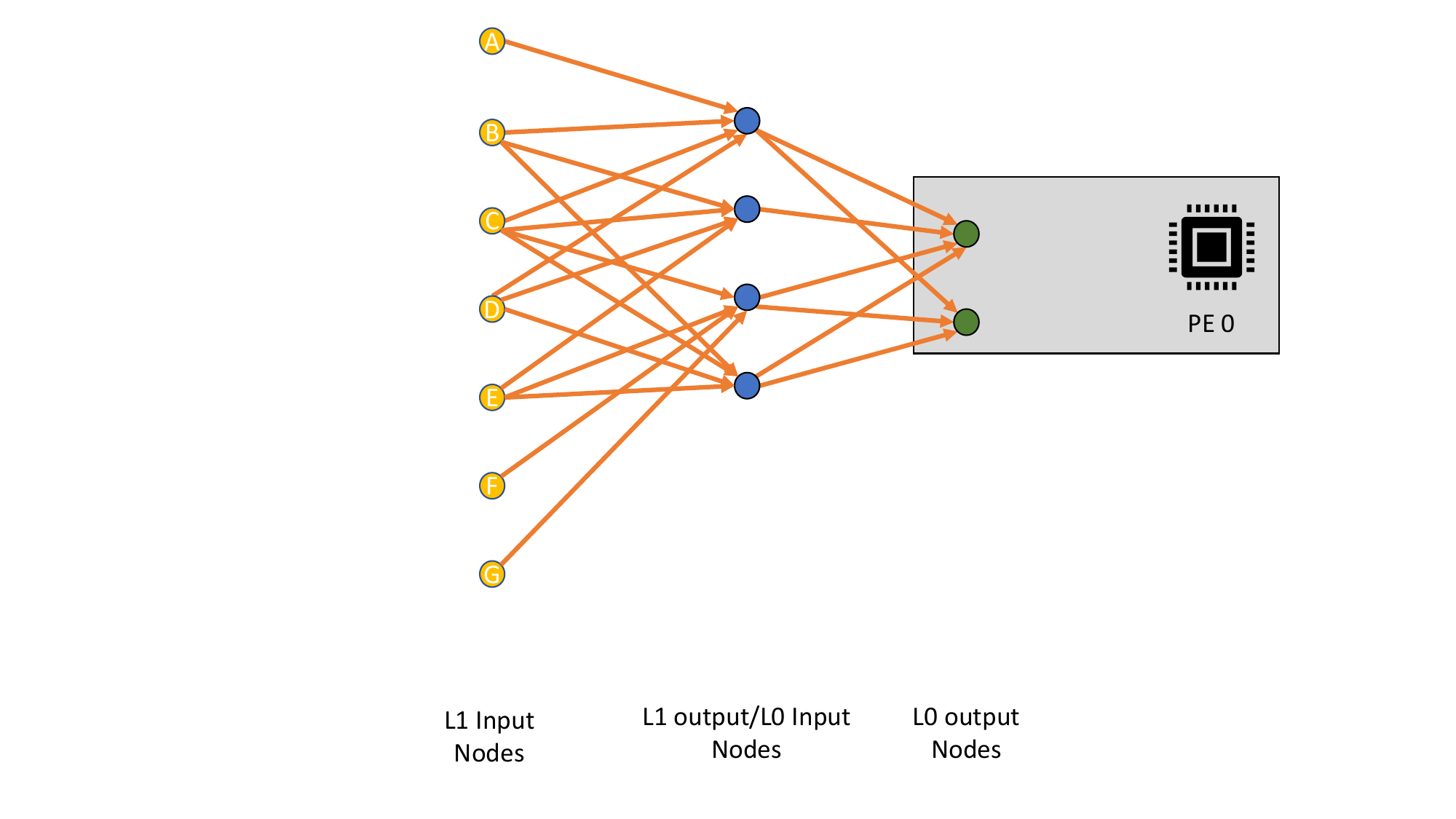}
    \caption{Independent Minibatching}
    \label{figc:independent_minibatching}
\end{subfigure}
\begin{subfigure}{.49\linewidth}
    \centering
    \includegraphics[width=\linewidth,page=5,trim=0.9cm 0.4cm 2.0cm 1.2cm]{figures/cooperative_minibatching_figures.pdf}
    \caption{Cooperative Minibatching}
    \label{figc:cooperative_minibatching}
\end{subfigure}
\caption{A comparison of Independent and Cooperative Minibatching approaches in the feature loading and forward-backward GNN stages.
The thickness of the red arrows indicates the data volume. Due to redundant
vertices across PEs, Independent Minibatching wastes (PCI-e) bandwidth for vertex embedding copies from Storage. Moreover, the PEs perform identical computations for
redundant edges across PEs in Independent Minibatching.}
\end{figure}

Our goal in this section is to empirically show the work reduction enjoyed by cooperative minibatching over independent
minibatching by reporting the number of vertices and edges processed per PE. We also report the number of vertices
that are communicated for cooperative minibatching during its all-to-all calls in~\cref{alg:cooperative_minibatching,figc:cooperative_minibatching}.
The results are given in~\cref{tabc:num_sampled}.

\begin{table*}[ht]
\caption{Average number of vertices and edges sampled in different layers with LABOR-0 per PE, reduced by taking the
maximum over 4 PEs (All
the numbers are in thousands, lower is better) with batch size $|S^0| = 1024$. $c|\tilde{S}^l|$ shows the number of vertices communicated at layer $l$.
Papers and mag were used as short versions of papers100M/GCN and mag240M/R-GCN dataset model pairs respectively.
Last column shows forward-backward (F/B) runtime in ms.}
\label{tabc:num_sampled}
\begin{adjustbox}{width=\linewidth,center}
\begin{tabular}{c | c | c | c | c | c | c | c | c | c | c | c | c}
\toprule
\textbf{Dataset} & \textbf{Part.} & \textbf{I/C} & $\bm{|S^3|}$ & $\bm{c|\tilde{S}^3|}$ & $\bm{|\tilde{S}^3|}$ & $\bm{|E^2|}$ & $\bm{|S^2|}$ & $\bm{c|\tilde{S}^2|}$ & $\bm{|\tilde{S}^2|}$ & $\bm{|E^1|}$ & $\bm{|S^1|}$ & \textbf{F/B} \\
\midrule
 \multirow{3}{*}{papers} & random & Indep & 463 & 0 & 463 & 730 & 74.8 & 0 & 74.8 & 93.6 & 9.63 & 8.9 \\
                             & random & Coop & 318 & 311 & 463 & 608 & 62.4 & 56.8 & 82.8 & 89.9 & 9.28 & 13.0 \\
                             & metis & Coop & 328 & 179 & 402 & 615 & 63.1 & 34.0 & 73.8 & 90.8 & 9.35 & 12.0 \\
\midrule
 \multirow{3}{*}{mag} & random & Indep & 443 & 0 & 443 & 647 & 67.9 & 0 & 67.9 & 82.0 & 8.78 & 199.9 \\
                          & random & Coop& 324 & 310 & 459 & 566 & 59.8 & 53.1 & 77.3 & 80.4 & 8.62 & 183.3 \\
                          & metis & Coop & 334 & 178 & 419 & 576 & 60.6 & 31.0 & 71.3 & 81.8 & 8.80 & 185.1 \\
\bottomrule
\end{tabular}
\end{adjustbox}
\end{table*}

Looking at~\cref{tabc:complexity,tabc:num_sampled}, we make the following observations:
\begin{enumerate}

\item All runtime complexities for cooperative minibatching scales with $|S^l_p(B)| = |S^l(B)|\frac{1}{P}$ and $|E^l_p(B)| = |E^l(B)|\frac{1}{P} \leq |S^l(B)|\frac{k}{P}$ and for independent minibatching with $|S^l(\frac{B}{P})|$ and $|E^l(\frac{B}{P})| \leq |S^l(\frac{B}{P})|k$, for a sampler with fanout $k$. Since $E[|S^l(B)|]$ is a
concave function, $E[|S^l(B)|]\frac{1}{P} \leq E[|S^l(\frac{B}{P})|]$, and this corresponds to looking 
at~\cref{figc:num_input_nodes}
first row with $x=B$ for coop and $x=\frac{B}{P}$ for independent if one wanted to guess how 
their runtime would change with changing $B$ and $P$. For an example, all the runtime numbers we 
have provided in the~\cref{tabc:cooperative_runtimes} are for 4 GPUs. Going from 4 to 8 would increase the edge
of cooperative over independent even more, see~\cref{tabc:cooperative_speedup}.

\item Sampling and Feature loading over PCI-e requires $\alpha \gg \beta$ for cooperative to get a speedup over independent.

\item In order for cooperative F/B to improve against independent, we need that
$\frac{\alpha}{c} > \frac{\gamma}{M}$.

\item Cross edge ratio $c$ reduces all communication between PEs. In particular, graph partitioning will lower both $c$ 
and $|\tilde{S}^l_p(B)|$, lowering the communication overhead, see $\bm{c|\tilde{S}^l|}$ 
columns in~\cref{tabc:num_sampled}.

\item The model complexity $M$ is small for the GCN model (papers100M) but large for the R-GCN 
model (mag240M), as shown by the F/B runtime numbers in Table 2. Also, the communication overhead
between the two scenarios is similar, meaning communication can take from upto 30\% to less than a few percent 
depending on $M$. For the papers100M dataset, communication makes up more of the runtime, so graph partitioning
helps bring the F/B runtime down. However, the load imbalance caused by graph partitioning
slows down the F/B runtime despite lowered communication costs for the mag240M dataset.

\end{enumerate}

\subsection{Smoothed Dependent Minibatching}
\label{subsecc:smoothed_dependent_minibatching}

As described in \cref{secc:background}, NS algorithm works by using the random variate
$r_{ts}$ for each edge $(t \to s)$.
Being part of the same minibatch means that a single random variate $r_{ts}$ will be used for each edge.
To generate these random variates, we initialize a Pseudo Random Number Generator (PRNG) with a 
random seed $z$ along with $t$ and $s$ to ensure that the first 
rolled random number $r_{ts}$ from the PRNG stays fixed when the random seed $z$ is fixed.
Given random seeds $z_1$ and $z_2$, let's say
we wanted to use $z_1$ for the first $\kappa$ iterations and would later switch to the seed $z_2$.
This switch can be made smoothly by interpolating between the random seeds $z_1$ and $z_2$ while ensuring that
the resulting sampled random numbers are uniformly distributed. If we are processing the
batch number $i < \kappa$ in the group of $\kappa$ batches, then we want the contribution of $z_2$ to be $c = \frac{i}{\kappa}$, while the
contribution of $z_1$ is $1 - c$. We can sample $n_{ts}^1 \sim \mathcal{N}(0, 1)$ with seed $z_1$ and
$n_{ts}^2 \sim \mathcal{N}(0, 1)$ with seed $z_2$. Then we combine them as
\[
n_{ts}(c) = \cos(\frac{c\pi}{2}) n_{ts}^1 + \sin(\frac{c\pi}{2}) n_{ts}^2
\]
\noindent note that $n_{ts}(0) = n_{ts}^1$, $n_{ts}(1) = n_{ts}^2$ and $n_{ts}(c) \sim \mathcal{N}(0, 1), \forall c \in \mathbb{R}$ also, then we can set
$r_{ts} = \Phi(n_{ts}(c))$, where $\Phi(x) = \mathbb{P}(Z \leq x)$ for $Z \sim \mathcal{N}(0, 1)$, to get $r_{ts} \sim U(0, 1)$ that the NS algorithm can use. Dropping $s$
from all the notation above gives the version for LABOR.
In this way, the random variates change slowly as we are switching from one
group of $\kappa$ batches to another. When $i = \kappa$, we let $z_1 \leftarrow z_2$ and
initialize $z_2$ with another random seed.
To use this approach, only the random variate generation logic
needs modification, making its implementation for any sampling algorithm straightforward compared to
the nested approach initially described. \cref{figc:dependent_minibatching_example} shows an example of this approach for $\kappa=2$.

\subsection{Dependent batches (cont.)}
\label{subsecc:dependent_batches_cont}

Looking at the training loss and validation F1-score with
sampling curves in~\cref{figc:same_loss_cache_sampling}, we notice that the performance gets better as $\kappa$ is increased. This
is due to the fact that a vertex's neighborhood changes slower and slower as $\kappa$ is increased,
the limiting case being $\kappa=\infty$, in which case the sampled neighborhoods are unchanging. This
makes training easier so $\kappa=\infty$ case leads the pack in the training loss and validation F1-
score with sampling curves.

\begin{figure*}[ht]
    \centering
    \includegraphics[width=\linewidth]{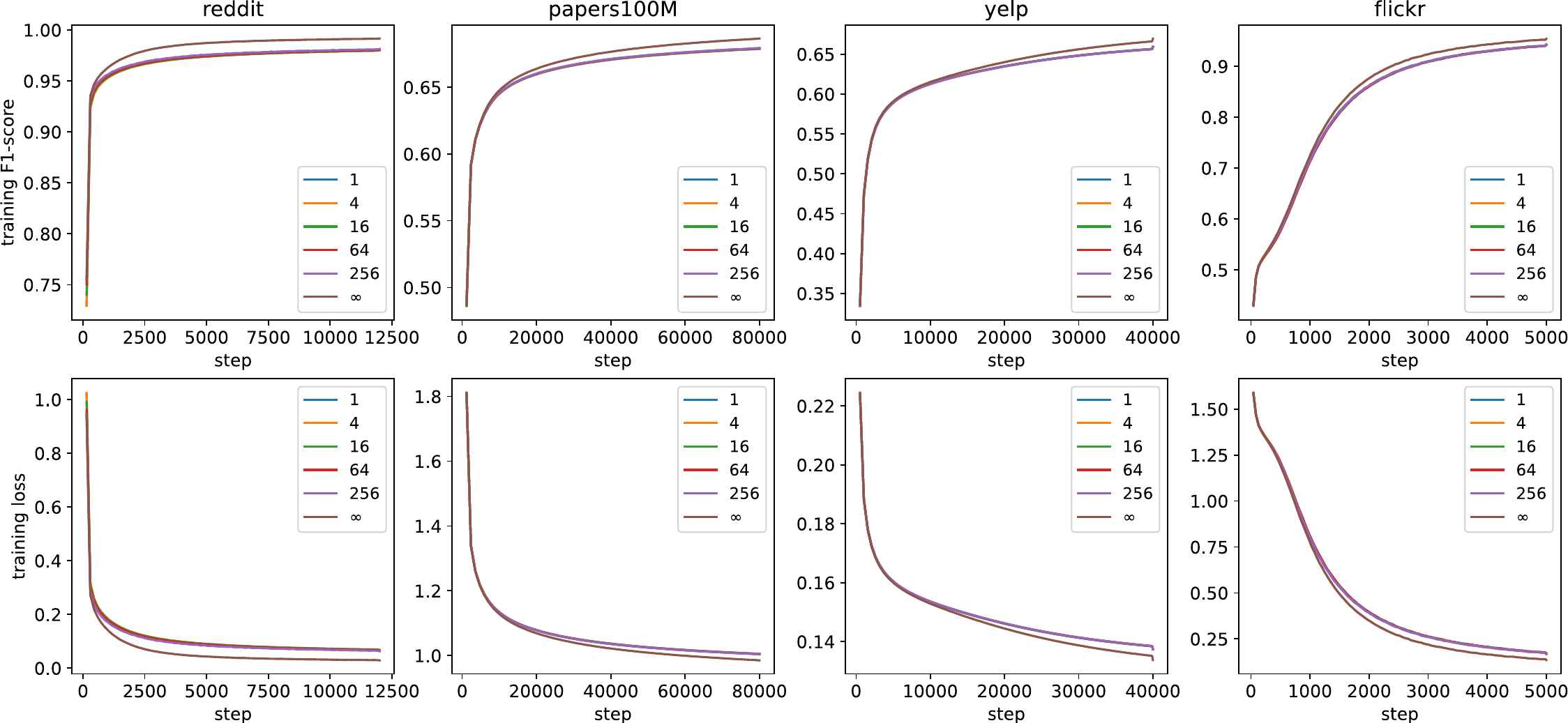}
    \caption{LABOR-0 sampling algorithm with $1024$ batch size and varying $\kappa$ dependent minibatches,
    $\kappa=\infty$ denotes infinite dependency, meaning the neighborhood
    sampled for a vertex stays static during training. The first row shows the training F1-score
    with the dependent sampler. The second row shows the training loss curve. Completes~\cref{figc:same_loss_cache}.}
    \label{figc:same_loss_cache_sampling}
\end{figure*}

\subsection[Comparing a single batch vs P independent batches convergence]{Comparing a single batch vs $P$ independent batches convergence}
\label{subsecc:coop_indep_conv}

We investigate whether training with a single large batch in $P$-GPU training shows any convergence differences
to the current approach of using $P$ separate batches for each of the GPUs. We use a global batch size of $4096$ and divide
a batch into $P \leq 8$ independent batches, with each batch having a size of $\frac{4096}{P}$.
We use NS and LABOR-0 samplers with fanouts of $k = 10$ for each of the $3$ layers. \cref{figc:coop_vs_indep} shows that there are no significant differences
between the two approaches, we present the results averaged over the samplers to save space.

\begin{figure*}[ht]
    \centering
    \includegraphics[width=\linewidth]{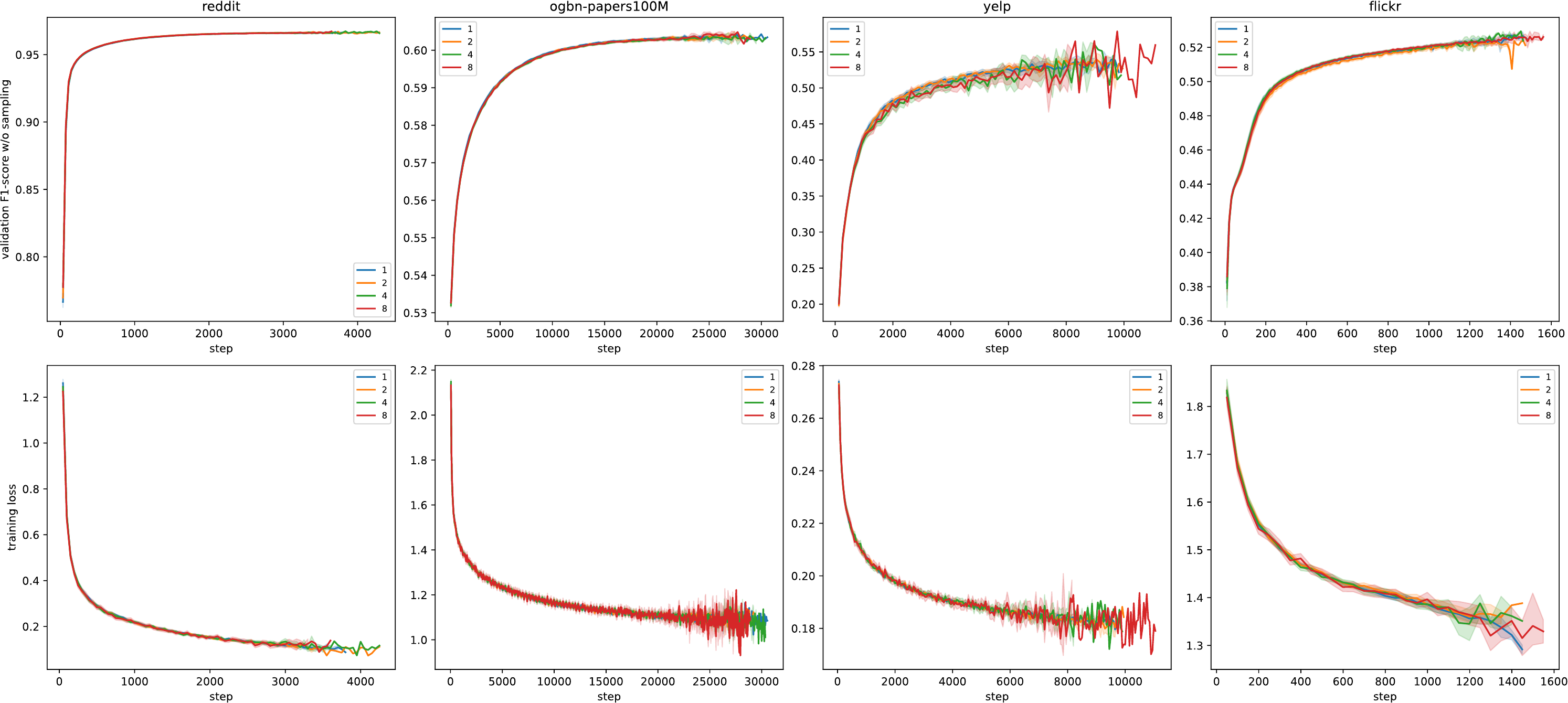}
    \caption{Convergence difference between cooperative vs independent minibatching with a global batch size of 4096 averaged over Neighbor and LABOR-0 samplers.}
    \label{figc:coop_vs_indep}
\end{figure*}

\subsection{Redundancy Free GCN aggregation}
\label{subsecc:redundancy_free_connection}

\cite{jia2020} proposes a method of reducing processing during neighborhood aggregation by finding common sub-neighborhoods
among aggregated vertices, whether using full-batch or minibatch training. That is, if two vertices have the neighborhoods
$\{A, B, C\}$ and $\{B, C, D\}$, and a summation operator is used for aggregation, then instead of computing four additions:
$A + B + C$ and $B + C + D$ concurrently, the three additions $BC = B + C$, $A + BC$, and $BC + D$ can be computed. This approach is orthogonal to the approaches proposed in~\cref{secc:cooperative_minibatching} in that it reduces redundant aggregation steps, whereas our approach reduces redundant input nodes and edges in parallel computations. As such, the two
approaches could be employed together.




\subsection{Limitations}
\label{subsecc:limitations}

Our proposed Cooperative Minibatching approach requires a relatively fast interconnect between the Processing Elements.
Modern multi-GPU systems usually have such interconnect between the GPUs.
Nowadays, such fast interconnects are being extended to the multi-node setting, for up to 72 GPUs connected via NVLink~\citep{nvl72}.

On distributed training, each computing node equipped with multiple cores and/or GPUs, the interconnect between cores and/or GPUs of the same node, are also relatively much faster than the interconnect among nodes.

The proposed dependent minibatching approach does not have any such limitation.

\end{document}